\newtheorem{theorem}{Theorem}
\newtheorem{lemma}{Lemma}
\newtheorem{definition}{Definition}
\begin{document}
%
\title{Discrete Multi-modal Hashing with Canonical Views for Robust Mobile Landmark Search}
\author{Lei~Zhu,
	Zi~Huang,
	Xiaobai~Liu,
	Xiangnan He,
	Jingkuan Song,
	Xiaofang Zhou
	\thanks{This work was partially supported by ARC project under Grant DP150103008 and FT130101530.}
	\thanks{L. Zhu, Z. Huang, and X. Zhou are with the School of Information Technology and Electrical
		Engineering, The University of Queensland, Brisbane, QLD 4072, Australia (e-mail: leizhu0608@gmail.com; huang@itee.uq.edu.au; 	zxf@itee.uq.edu.au).}
	}
%
%


%
%

\markboth{IEEE Transactions on Multimedia}%
{Shell \MakeLowercase{\textit{et al.}}: Bare Demo of IEEEtran.cls for IEEE Journals}
%



\maketitle

\begin{abstract}
Mobile landmark search (MLS) recently receives increasing attention for its great practical values. However, it still remains unsolved due to two important challenges. One is high bandwidth consumption of query transmission, and the other is the huge visual variations of query images sent from mobile devices. In this paper, we propose a novel hashing scheme, named as canonical view based discrete multi-modal hashing (CV-DMH), to handle these problems via a novel three-stage learning procedure. First, a submodular function is designed to measure visual representativeness and redundancy of a view set. With it, canonical views, which capture key visual appearances of landmark with limited redundancy, are efficiently discovered with an iterative mining strategy. Second, multi-modal sparse coding is applied to transform visual features from multiple modalities into an intermediate representation. It can robustly and adaptively characterize visual contents of varied landmark images with certain canonical views. Finally, compact binary codes are learned on intermediate representation within a tailored discrete binary embedding model which preserves visual relations of images measured with canonical views and removes the involved noises. In this part, we develop a new augmented Lagrangian multiplier (ALM) based optimization method to directly solve the discrete binary codes. We can not only explicitly deal with the discrete constraint, but also consider the bit-uncorrelated constraint and balance constraint together. The proposed solution can desirably avoid accumulated quantization errors in conventional optimization method which simply adopts two-step ``relaxing+rounding" framework. With CV-DMH, robust visual query processing, low-cost of query transmission, and fast search process are simultaneously supported. Experiments on real world landmark datasets demonstrate the superior performance of CV-DMH over several state-of-the-art methods.
\end{abstract}

\begin{IEEEkeywords}
Mobile landmark search, canonical view based discrete multi-modal hashing, submodular function, intermediate representation, binary embedding, discrete optimization
\end{IEEEkeywords}

%
\IEEEpeerreviewmaketitle

\section{Introduction}
%
%
%
%
%
%

\IEEEPARstart{W}{ith} the rapid growth of advanced mobile devices and social networking services, tremendous
amount of landmark images have been generated and disseminated in popular
social networks. Mobile landmark search (MLS)
is gaining its importance and increasingly becomes one of the most important
techniques to pervasively and intelligently access knowledge about the landmarks of interest
\cite{sigpro/ChengS16,DBLP:conf/mmm/ChengRSM13}.

However, MLS still remains unsolved due to two important challenges. 1) \emph{Low-bit query transmission}.
Mobile devices have limited computational power, relatively small memory storage, and short battery lifetime.
Consequently, a client-server architecture is the main stream search paradigms adopted in
existing landmark search systems, where query is captured and submitted by mobile devices, computation-intensive landmark search
is performed on the remote sever with strong computing capability and rich image resources.
Since wireless bandwidth from mobile devices to server
is limited, how to generate a compact signature for query to achieve low bit
data transmission becomes vital important. 2) \emph{Huge visual query variations}. Visual
splendour of a landmark can be photographed by multiple tourists under various circumstances, e.g. a wide sampling of positions, viewpoints, focal lengths, various weather conditions or illuminations. Therefore, the recorded images will have visual variations.
Besides, landmarks could be comprised of a wide range of attractive sub-spots. The images taken for sub-spots of the same landmark may appear with more visual diversity \cite{tmm/ZhuSJXZ15}. A typical example is presented in Fig. \ref{fig:diversity}. All the characteristics of landmark inevitably diversify the
visual appearances of the recorded query images, thus posing great challenges on MLS search system to robustly accommodate visual queries.
\begin{figure}
\centering
\includegraphics[width=85mm]{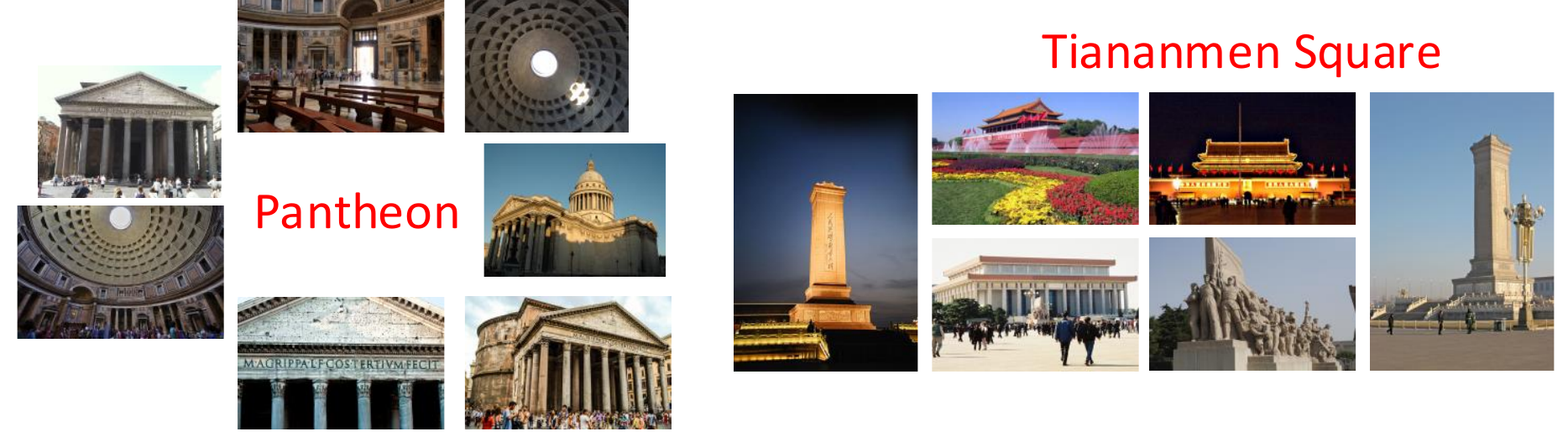}
\caption{The left sub-figure shows the images recoded for landmark \emph{Pantheon} from different viewpoints. The right sub-figure presents images recorded for different sub-pots of landmark \emph{Tiananmen Square}. All above landmark images demonstrate huge visual diversity, which presents great challenges on the mobile landmark search.}
\label{fig:diversity}
\vspace{-4mm}
\end{figure}

Hashing \cite{7915734,7047876,Song2017,leizhutcyb,leitkde2017}  is a promising technique to achieve low-bit query transmission for
supporting efficient mobile landmark search. Its main objective is to learn compact binary codes from high-dimensional data, while
preserving the similarity of the original data. Thus, it can significantly reduce transmission cost with storage-efficient binary embedding, and moreover, speedup the search process with simple but efficient Hamming distance computations \cite{DCFS,rhcmass}. However, most existing hashing strategies developed for MLS are based on uni-modal visual-words based features \cite{IJCVMLS,AIJMLS,TCFSCH}. They generally suffer from
1) limited discriminative capability and 2) poor robustness against visual variations \cite{MLSTMtao,TCFSCH}.
Although general multi-modal hashing techniques \cite{MVAGH,HASHMFTWO,HASHMFONE,MVLH} can
improve discriminative capability of binary codes \cite{tcyb/ZhuSJZX15,Xie2016}, they are designed based on simple
multiple feature integration without specific query robustness accommodation. Hence, their performance may be limited
on searching mobile landmark images.

Motivated by the above considerations, in this paper, we propose a novel hashing scheme, named as canonical view based discrete
multi-modal hashing (CV-DMH), to facilitate efficient and robust MLS. We define canonical views as the
views which capture the key characteristics of visual landmark appearances with limited redundancy. Based on them, an arbitrary image
captured by tourists or users can be robustly represented using either a specific canonical view or the cross-scenery of
multiple canonical views. Accordingly, varied visual contents of landmark can
be effectively characterized using their visual correlations to certain
canonical views. Through encoding these relations into the binary codes, various visual queries can be robustly accommodated. Furthermore, the low-cost query transmission and fast search can be well supported.

Specifically, CV-DMH works with a three-stage learning procedure.
First, a submodular function is designed to measure visual representativeness and redundancy of a view set.
With it, the optimal canonical views are efficiently discovered
by an iterative mining strategy with theoretical guarantee. Second, multi-modal sparse
coding is applied to transform visual features from multiple
modalities into a unified intermediate representation. It can robustly
characterize visual contents of varied landmark images with certain canonical views. Finally, compact binary codes are
learned on intermediate representation within a tailored discrete
binary embedding model. The learning objective is to preserve visual relations of
images measured with canonical views and remove the involved noises. In
this part, we not only explicitly deal with
the discrete constraint of hashing codes, but also consider the bit-uncorrelated
constraint and balance constraint together. A novel augmented Lagrangian
multiplier (ALM) \cite{ALMone} based optimization method is developed to directly solve
the discrete binary codes. It can desirably avoid the accumulated quantization errors in conventional
hashing method which simply adopts ``relaxing+rounding" optimization framework.

The contributions of this paper are summarized as follows:
\begin{enumerate}[1.]
  \item A submodular function is designed to measure visual representativeness
  and redundancy of a view set. With it, an iterative mining strategy is proposed
  to efficiently identify canonical views of landmarks using multiple modalities.
  Theoretical analysis demonstrates that it can obtain near-optimal solutions.

  \item A novel intermediate representation generated by multi-modal sparse coding
  is proposed to robustly characterize the visual contents of varied landmark images.
  It provides a natural and effective connection between the canonical views and binary
  embedding model.

  \item A binary embedding model tailored for canonical views is developed to
  preserve visual relations of images into binary codes and thus support efficient
  MLS with great robustness. We propose a direct discrete hashing optimization method based on ALM.
  It effectively avoids the accumulated quantization errors in
  conventional hashing methods which simply adopt ``relaxing+rounding" optimization framework.
\end{enumerate}

Compared with our previous work \cite{2CVR}, several enhancements have been made in this paper. They can be summarized as follows:
\begin{enumerate}[1.]
  \item We conduct a comprehensive review of related work and introduce more details of the proposed approach.
  \item We propose a novel discrete binary embedding model. The iterative computation for solving hashing codes enjoys an efficient optimization process. Moreover, direct discrete optimization can successfully avoid the accumulated quantization errors.
  \item More experiments are conducted and the presented results validate the effects of the proposed approach.
\end{enumerate}

The rest of the study is structured as follows. Section \ref{sec:2} introduces related work.
System overview of CV-DMH based MLS system is illustrated in Section \ref{sec:so}.
Details about CV-DMH are introduced in Section \ref{sec:3}. Experimental configuration is presented
in Section \ref{sec:4}. In Section \ref{sec:5}, we give experimental results and analysis.
Section \ref{sec:6} concludes the study with a summary and future work.

\section{Related Work}
\label{sec:2}
Due to the constraints of space, only the work most related to this study is introduced in this section. In particular,
we present a short literature review on mobile landmark search, multi-modal hashing, and discrete hashing optimization.

\subsection{Mobile Landmark Search}
Most existing approaches developed for mobile landmark search focus on compressing high-dimensional visual-words based landmark feature into binary codes to achieve low bit rate data transmission. Ji \emph{et al}. \cite{IJCVMLS} present a location discriminative vocabulary coding (LDVC) to compress bag-of-visual-words (BoVW) with location awareness by combining both visual content and geographical context. The compact landmark descriptor is generated with an iterative optimization between geographical segmentation and descriptor learning. Duan \emph{et al}. \cite{AIJMLS} explore multiple information sources, such as landmark image, GPS, crowd-sourced hotspot
WLAN, and cell tower locations, to extract location discriminative compact landmark image
descriptor. Chen \emph{et al}. \cite{MLSTMtao} develop a soft bag-of-visual phrase (BoVP)
to learn category-dependent visual phrases, by capturing co-occurrence features of
neighbouring visual-words.  The context location and direction information captured by
mobile devices are also integrated with the proposed BoVP. To alleviate the online memory cost,
Zhou \emph{et al}. \cite{TCFSCH} propose codebook-free scalable cascaded hashing (SCH) for mobile landmark search.
In their approach, matching recall rate is ensured first and false positive matches are then removed with a subsequent verification step.

All the aforementioned techniques learn compact binary codes from only visual-words based
features, without considering valuable information from other visual modalities. Therefore,
they will generate binary codes with limited discriminative information. To the best of our knowledge,
only \cite{MLS3D} is proposed to robustly accommodate landmark images under various conditions. However, it
sends compressed query images with low resolution for landmark search, Consequently, it still consumes considerable transmission bandwidth.
\begin{figure*}
\centering
\includegraphics[width=180mm]{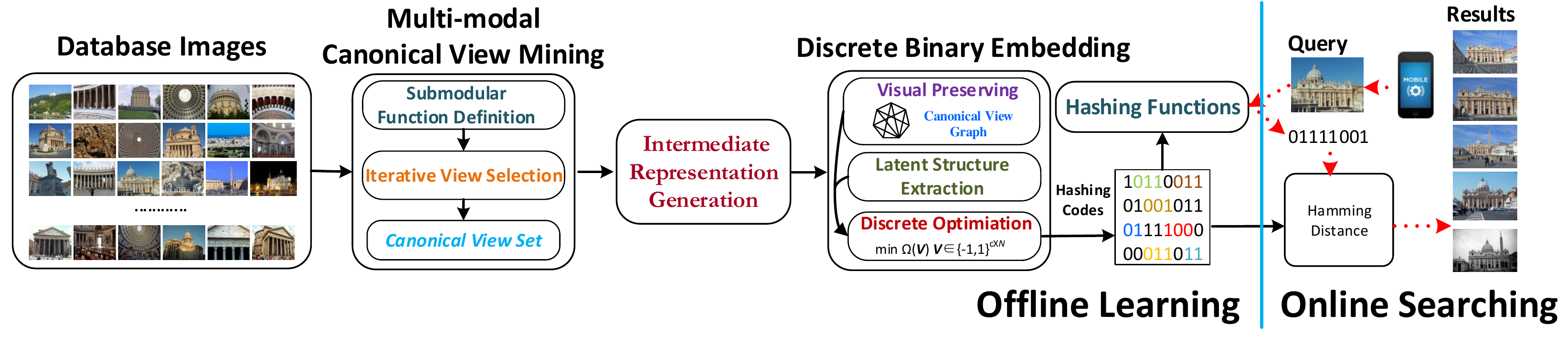}
\vspace{-3mm}
\caption{Framework of the CV-DMH based mobile landmark search system. The main aim of the offline learning is
to learn hashing codes of database images and hashing functions for online query. This part mainly consists of three steps: multi-modal canonical view mining, intermediate representation generation, and discrete binary embedding. The online searching part generates hashing codes for queries and performs efficient online similarity search in Hamming space.}
\label{fig:sys}
\vspace{-3mm}
\end{figure*}

\subsection{Multi-modal Hashing}
Multi-modal hashing has been emerging as a promising technique to generate compact binary
code based on multiple features. The earliest study on this topic is composite hashing with
multiple information sources (CHMIS) \cite{HASHMFFOUR}. It integrates discriminative information from multiple
sources into the hashing codes via weight adjustment on each individual source to maximize the hashing embedding performance.
However, CHMIS just post-integrates linear output of features and fails to fully exploit their correlations.
Kim \emph{et al}. \cite{MVAGH} present multi-view anchor graph hashing (MVAGH) to extend anchor graph hashing
(AGH) \cite{HASHGRAPH} in multiple views (visual modalities). The hashing codes are determined as the subset
of eigenvectors calculated from an averaged similarity matrix induced by multiple anchor graphs.
Song \emph{et al}. \cite{HASHMFTWO} develop multiple feature hashing (MFH). The learning process preserves
the local structure information of each individual feature, and simultaneously considers the global
alignment of local structures for all the features. By using the learned hashing hyper-plane,
MFH first concatenates all the features into a single vector and then projects it into binary codes.
Liu \emph{et al}. \cite{HASHMFONE} propose compact kernel hashing (CKH). It formulates
the similarity preserving hashing with linearly combined multiple kernels corresponding to different features.
More recently, multi-view latent hashing (MVLH) \cite{MVLH} is proposed to incorporate
multi-modal features in binary hashing learning by discovering the latent factors
shared by multiple modalities.  Multi-view alignment hashing (MVAH) is presented in \cite{TIPMVAH} to combine
multiple image features for learning effective hashing functions based on the regularized kernel nonnegative matrix factorization.

Distinguished from the hashing methods presented above, CV-DMH learns multi-modal hashing codes on canonical views
by capturing the key characteristics of landmarks. With this design, the generated binary codes
can enjoy desirable robustness on query accommodation. Moreover, CV-DMH directly solves discrete hashing codes. It can effectively avoid the accumulated quantization errors in existing multi-modal hashing methods adopting simplified ``relaxing+rounding" optimization framework.

\subsection{Discrete Hashing Learning}
Most existing hashing methods simply apply two-step ``relaxing+rounding" optimization framework to solve hashing codes. However, as indicated by recent literature \cite{DGH,SDH,luo2017robust}, this simple relaxing will bring significant information quantization loss. To alleviate this problem, several approaches are proposed to directly deal with the discrete optimization challenge. Discrete graph hashing (DGH) \cite{DGH} reformulates the graph hashing with a discrete optimization framework and solves the problem with a tractable alternating maximization algorithm. Supervised discrete hashing (SDH) \cite{SDH} learns discrete hashing codes via supervised learning. Cyclic coordinate descent is applied to calculate discrete hashing bits in a closed form. Coordinate discrete hashing (CDH) \cite{CDOECVR} is designed for cross-modal hashing \cite{xu2017learning}, and the discrete optimization proceeds in a block coordinate descent manner. Column sampling based discrete supervised hashing (COSDISH) is proposed in \cite{CSBDSH} to learn discrete hashing codes from semantic information by column sampling. Discrete proximal linearized minimization (DPLM) is presented in \cite{TIP2016binary} to solve discrete hashing codes. It reformulates the hashing learning as minimizing the sum of a smooth loss term with a nonsmooth indicator function. The problem is finally solved by an iterative procedure with each iteration admitting an analytical discrete solution. Kernel-based supervised discrete hashing (KSDH) \cite{Shi2016} solves discrete hashing codes via asymmetric relaxation strategy. It relaxes the hashing function into a general binary code matrix which is calculated within an alternative strategy. Although these approaches achieve certain success, their proposed discrete optimization solutions are specially designed for uni-modal hashing and particular hashing types. Hence, they cannot be easily generalized to other hashing learning formulations.

Our work is an advocate of discrete hashing optimization but focuses on learning robust hashing codes for mobile landmark search. We integrate the discriminative information from multi-modal features directly into discrete hashing codes based on the discovered informative canonical views. Moreover, our proposed discrete optimization strategy can not only explicitly deal with the discrete constraint of binary codes, but also consider the bit-uncorrelated constraint and balance constraint together\footnote{DPLM can cope with uncorrelation and balance constraints. However, it simply transfers two constraints to the objective function and tries to optimize a relaxed equivalent problem.}. To the best of our knowledge, there is still no similar work.

\section{System Overview}
\label{sec:so}
This section briefly introduces system overview of the proposed CV-DMH based mobile landmark search system. As shown in the Fig. \ref{fig:sys}, the system is mainly comprised of two key components: offline learning and online searching.
\begin{itemize}
\itemsep=3pt
\item \emph{Offline Learning}: The aim of this part is to learn hashing functions which can project high-dimensional multi-modal features of both query and database images into binary hashing codes. Specifically, offline learning is further divided into four subsequent sub-processes: feature extraction, multi-modal canonical view mining, intermediate representation generation, and discrete binary embedding. In the system, multi-modal features are first extracted from heterogeneous visual modalities to represent images. Then, multi-modal canonical view mining is proposed in order to efficiently discover a compact but informative canonical view set from noisy landmark image collections to capture key visual appearances of landmarks. Next, in order to robustly model diverse visual contents, an intermediate representation is generated by computing multi-modal sparse reconstruction coefficients between image and canonical views. Finally, compact binary codes of database images and hashing functions for online queries are learned by formulating a unified discrete binary embedding model.

\item \emph{Online Searching}: Query image is first submitted by user from mobile devices. Multi-modal visual features of it are then extracted on image and transformed to an intermediate image representation with the same pipeline conducted on database images. Next, binary codes of query are generated with the hashing functions learned from offline learning. Finally, the Hamming distances are computed with simple bit operations and ranked in ascending order, and their corresponding landmark images are returned.
\end{itemize}

\section{The Proposed CV-DMH}
\label{sec:3}
This section provides the details of the proposed CV-DMH. First, we introduce multi-modal canonical view mining. Second, we give details of intermediate representation generation. Third, we formulate the discrete binary embedding model and give an efficient discrete optimization solution. Finally, we summarize the key steps of CV-DMH and give a computation complexity analysis.

\subsection{Multi-modal Canonical View Mining}
\label{sec:3:2}

Landmark images have an interesting characteristic that is different from general images. Landmark is located on specific venue or attraction. On the one hand, in real practice, only the spectacular and attractive views of a landmark will be photographed by various tourists spontaneously. On the other hand, many of them would like to share their recorded images in social websites (e.g. Flickr). These accumulated canonical views of landmarks in social websites coincidentally reflect the common preferences of tourists. On the perspective of technique, they capture the key visual characteristics of landmarks, and an arbitrary query image recorded by tourists can be characterized with several canonical views of landmark. Therefore, it is promising to leverage canonical views of landmark as hashing learning bases to robustly accommodate visual variations of query images captured by mobile devices. Principally, optimal canonical views should possess two important properties: 1) Representative. The canonical views should capture the key visual landmark components from large quantities of recorded landmark images. 2) Compact. Redundant views will bring noises and increase extra computation burden.

Motivated by the above analysis, we propose an efficient submodular function based mining algorithm, measuring the representativeness and compactness of view set, to iteratively discover canonical views. Specifically, we first quantitatively define the above two properties of canonical view set\footnote{Canonical view set is comprised of canonical views.}. Then, a submodular function is designed accordingly and an efficient iterative mining approach with theoretical guarantee is developed for canonical view discovery.
\begin{definition}
Let $\mathcal I$ denote image space, $\mathcal I= \{\mathcal I_n\}_{n=1}^N$, $N$ is the number of database images.
Let $\mathcal L$ denote landmark space, $\mathcal L= \{\mathcal L_m\}_{m=1}^M$, $M$ is the number of landmarks in database.
$\mathcal L_m$ is defined as a set of images which are recorded at the nearby positions of the $m_{th}$ landmark.
Let $\mathcal V$ denote a view set of $\mathcal L$. It is defined as a set of images $\{v_i\}_{i=1}^{|\mathcal V|}$ belonging to $\mathcal I$, $\mathcal V \subseteq \mathcal I$, $|\mathcal V|\ll |\mathcal I|$.
\end{definition}

\begin{definition}
Let $\texttt{Rep}(\mathcal V)$
denote the visual representativeness of view set $\mathcal V$ over $\mathcal L$.
It is defined as $\texttt{Rep}(\mathcal V)=\sum_{v_i\in \mathcal V}\texttt{Rep}(v_i)=\sum_{v_i\in \mathcal V} \sum_{v_j\in \mathcal I, i\neq j}g_{ij}$, $g$ is the function which measures the
feature similarity of two image views, $g_{ij}$ is short for $g(v_i,v_j)$.
Let $\texttt{Red}(\mathcal V)$ denote the visual redundancy of view set $\mathcal V$.
It is defined as $\texttt{Red}(\mathcal V)=\sum_{v_i\in V}
\texttt{Red}(v_i)=\sum_{v_i,v_j\in \mathcal V, i\neq j} g_{ij}$.
\end{definition}

\begin{definition}
Let $\mathcal C$ denote the canonical view set of $\mathcal L$.
The views involved in $\mathcal C$ can comprehensively represent
diverse visual contents of landmark, and meanwhile, have less visual redundancy. In this paper, it is defined as
$\mathcal C=\arg\max_{\mathcal V\subseteq \mathcal I, |\mathcal V|=T} h(\mathcal V), h(\mathcal V)=\texttt{Rep}(\mathcal V)- \texttt{Red}(\mathcal V)$, $T$ is cardinality of canonical view set.
\end{definition}
\begin{lemma}
$h(\mathcal V)$ is submodular function. That is, $\forall \mathcal V_1\subseteq \mathcal V_2\subseteq \mathcal V, \forall v_j\notin \mathcal V$, $h(\mathcal V_1\cup v_j)-h(\mathcal V_1)\geq h(\mathcal V_2\cup v_j)-h(\mathcal V_2)$.
\end{lemma}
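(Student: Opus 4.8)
The plan is to separate $h$ into its two constituent pieces, show that $\texttt{Rep}$ contributes nothing to the submodularity inequality because it is \emph{modular} (set-additive), and that $-\texttt{Red}$ is the term that actually drives the inequality, using only the nonnegativity of the similarity function $g$. Concretely, for any $\mathcal S\subseteq\mathcal I$ with $v_j\notin\mathcal S$ I would first compute the marginal gain $h(\mathcal S\cup v_j)-h(\mathcal S)$ in closed form. For the $\texttt{Rep}$ part, since $\texttt{Rep}(\mathcal S)=\sum_{v_i\in\mathcal S}\texttt{Rep}(v_i)$ is a sum over the elements of $\mathcal S$, its marginal gain upon inserting $v_j$ is exactly $\texttt{Rep}(v_j)=\sum_{v_k\in\mathcal I,\,k\neq j}g_{jk}$, a quantity that depends only on $v_j$ and $\mathcal I$, not on $\mathcal S$. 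Hence the $\texttt{Rep}$ contribution to $h(\mathcal V_1\cup v_j)-h(\mathcal V_1)$ and to $h(\mathcal V_2\cup v_j)-h(\mathcal V_2)$ is identical and cancels when we compare the two marginal gains.

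Next I would handle $\texttt{Red}$. Writing $\texttt{Red}(\mathcal S)=\sum_{v_i,v_k\in\mathcal S,\,i\neq k}g_{ik}$ as a sum over ordered pairs, inserting $v_j$ creates precisely the new ordered pairs $(v_j,v_k)$ and $(v_k,v_j)$ for every $v_k\in\mathcal S$, so
\[
\texttt{Red}(\mathcal S\cup v_j)-\texttt{Red}(\mathcal S)=\sum_{v_k\in\mathcal S}\bigl(g_{jk}+g_{kj}\bigr).
\]
Combining the two pieces gives the marginal gain of $h$:
\[
h(\mathcal S\cup v_j)-h(\mathcal S)=\texttt{Rep}(v_j)-\sum_{v_k\in\mathcal S}\bigl(g_{jk}+g_{kj}\bigr).
\]
Now apply this with $\mathcal S=\mathcal V_1$ and $\mathcal S=\mathcal V_2$ and subtract. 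The $\texttt{Rep}(v_j)$ terms cancel, leaving
\[
\bigl[h(\mathcal V_1\cup v_j)-h(\mathcal V_1)\bigr]-\bigl[h(\mathcal V_2\cup v_j)-h(\mathcal V_2)\bigr]
=\sum_{v_k\in\mathcal V_2\setminus\mathcal V_1}\bigl(g_{jk}+g_{kj}\bigr).
\]
Since $g$ is a similarity measure it is nonnegative, so every summand on the right is $\ge 0$, and the right-hand side is $\ge 0$ because $\mathcal V_1\subseteq\mathcal V_2$ makes $\mathcal V_2\setminus\mathcal V_1$ a valid (possibly empty) index set. This is exactly the claimed diminishing-returns inequality $h(\mathcal V_1\cup v_j)-h(\mathcal V_1)\ge h(\mathcal V_2\cup v_j)-h(\mathcal V_2)$.

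The argument is essentially bookkeeping, so there is no deep obstacle; the one place to be careful is the pair-counting in $\texttt{Red}$ — getting the marginal term right as $\sum_{v_k\in\mathcal S}(g_{jk}+g_{kj})$ (ordered pairs) versus a single-count version, and making explicit the mild hypothesis that $g\ge 0$ on which the whole conclusion rests. I would also remark that $\texttt{Rep}$ being modular means it is simultaneously submodular and supermodular, which is why it is neutral here; the submodularity of $h$ comes entirely from subtracting the supermodular penalty $\texttt{Red}$ (a nonnegative-weighted sum of products of indicator-like set memberships), a standard template for building submodular objectives.
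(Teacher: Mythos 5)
Your proof is correct and follows essentially the same route as the paper's: the $\texttt{Rep}$ term is modular so its marginal gains cancel, and the inequality comes from the $\texttt{Red}$ marginal gain $\sum_{v_k\in\mathcal S}(g_{jk}+g_{kj})$ growing with the set, using $g\ge 0$ (the paper writes this as $-2\sum_{v_i\in\mathcal V_1}g_{ij}\ge -2\sum_{v_i\in\mathcal V_2}g_{ij}$, implicitly assuming $g$ symmetric). Your ordered-pair bookkeeping and explicit statement of the nonnegativity hypothesis are slightly more careful than the paper's compressed chain of equalities, but the argument is the same.
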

\vspace{-4mm}
\begin{equation*}
\begin{aligned}
\small
& \emph{Proof} \quad \texttt{Rep}(\mathcal V_1\cup v_j)-\texttt{Rep}(\mathcal V_1) =\texttt{Rep}(\mathcal V_2\cup v_j)- \\
&\texttt{Rep}(\mathcal V_2)-(\texttt{Red}(\mathcal V_1\cup v_j)-\texttt{Red}(\mathcal V_1))=-2\sum_{v_i\in \mathcal V_1 \setminus v_j} g_{ij}\\
&\geq -2\sum_{v_i\in \mathcal V_2 \setminus v_j} g_{ij} = -(\texttt{Red}(\mathcal V_2\cup v_j)-\texttt{Red}(\mathcal V_2)) \\
&\Rightarrow \ h(\mathcal V_1\cup v_j)-h(\mathcal V_1)\geq h(\mathcal V_2\cup v_j)-h(\mathcal V_2)
\end{aligned}
\end{equation*}
\begin{lemma}
$h(\mathcal V)$ is monotonically nondecreasing function. That is, $\forall \mathcal V_1\subseteq \mathcal V_2\subseteq \mathcal V$, $h(\mathcal V_1)\leq h(\mathcal V_2)$.
\end{lemma}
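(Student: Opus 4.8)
\emph{Proof plan.} The plan is to establish monotonicity by the same local-to-global device used for the submodularity lemma: reduce to the effect of adjoining a single view, and show that doing so never decreases $h$. Given $\mathcal V_1\subseteq\mathcal V_2$, fix an ordering of the views in $\mathcal V_2\setminus\mathcal V_1$ and form the chain $\mathcal V_1=\mathcal S_0\subseteq\mathcal S_1\subseteq\cdots\subseteq\mathcal S_k=\mathcal V_2$ with $\mathcal S_{t+1}=\mathcal S_t\cup\{v_t\}$. Then $h(\mathcal V_2)-h(\mathcal V_1)=\sum_{t=0}^{k-1}\big(h(\mathcal S_t\cup\{v_t\})-h(\mathcal S_t)\big)$, so it suffices to prove $h(\mathcal S\cup\{v\})\geq h(\mathcal S)$ for every $\mathcal S\subseteq\mathcal V$ and every $v\in\mathcal I\setminus\mathcal S$.

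Next I would compute this single-view increment from Definitions 2 and 3, reusing the elementary identities already used in the proof of Lemma 1: adjoining $v$ adds $\texttt{Rep}(v)=\sum_{w\in\mathcal I,\,w\neq v}g_{vw}$ to the representativeness, and $2\sum_{u\in\mathcal S,\,u\neq v}g_{uv}$ to the redundancy (each view already present forms a new ordered pair with $v$ in both directions). Splitting $\mathcal I$ into $\mathcal S$, the new view $v$, and the remaining images, and using $\mathcal S\subseteq\mathcal I$, this gives
\[
h(\mathcal S\cup\{v\})-h(\mathcal S)\;=\;\texttt{Rep}(v)-2\!\!\sum_{u\in\mathcal S,\,u\neq v}\!\! g_{uv}\;=\!\!\sum_{w\in\mathcal I\setminus(\mathcal S\cup\{v\})}\!\!\!\! g_{vw}\;-\!\!\sum_{u\in\mathcal S,\,u\neq v}\!\! g_{uv}.
\]
In words, the gain is the total feature similarity of $v$ to the views it does not yet cover minus its similarity to those already selected; equivalently, it is the increment of the cut between $\mathcal S$ and $\mathcal I\setminus\mathcal S$ in the similarity graph $g$.

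Finally I would argue this quantity is nonnegative, which is the step I expect to be the real obstacle: a cut-type set function is \emph{not} monotone over all subsets, so the argument must invoke both that $g$ is a nonnegative similarity ($g_{ij}\geq 0$) and that any view set in play is tiny compared with the image space ($|\mathcal V|\ll|\mathcal I|$, hence $|\mathcal S|\leq|\mathcal V_2|\ll|\mathcal I|$). Under these conditions the first sum runs over $|\mathcal I|-|\mathcal S|-1$ nonnegative terms while the subtracted sum has only $|\mathcal S|$ terms, so the representativeness a fresh canonical view picks up over the whole database dominates the at most $|\mathcal S|$ units of redundancy it introduces inside the current set; thus $h(\mathcal S\cup\{v\})-h(\mathcal S)\geq 0$, and telescoping along the chain yields $h(\mathcal V_1)\leq h(\mathcal V_2)$. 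For a fully rigorous version one would, as the application already does, restrict the statement to subsets of size at most $T$ and make the domination quantitative through an explicit bound relating $T$, $|\mathcal I|$, and the range of $g$.
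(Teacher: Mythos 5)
Your argument is correct to the same degree as the paper's and is essentially the same proof: both rest on rewriting $h$ in the cut form $h(\mathcal V)=\sum_{v_i\in\mathcal V}\sum_{v_j\in\mathcal I\setminus\mathcal V}g_{ij}$ and both settle the resulting comparison by the cardinality argument $|\mathcal V_1|,|\mathcal V_2|\ll|\mathcal I|$ (with $g_{ij}\geq 0$ implicit); indeed, summing your single-view increments over the chain (the cross terms among the added views cancel by symmetry of $g$) reproduces exactly the paper's comparison of $\sum_{v_i\in\mathcal V_1}\sum_{v_j\in\mathcal V_2\setminus\mathcal V_1}g_{ij}$ against $\sum_{v_i\in\mathcal V_2\setminus\mathcal V_1}\sum_{v_j\in\mathcal I\setminus\mathcal V_2}g_{ij}$, so the telescoping is only a local-versus-global difference in bookkeeping. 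The domination step you flag as the real obstacle (a cut-type function is not monotone in general, so one needs nonnegativity of $g$ and a quantitative smallness of the view set) is precisely the step the paper itself takes informally with ``since in our case $|\mathcal V_1|,|\mathcal V_2|\ll\mathcal I$,'' so your caveat identifies a limitation shared with the paper's own proof rather than a gap specific to your proposal.
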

\vspace{-4mm}
\begin{equation*}
\small
\begin{aligned}
&\emph{Proof} \quad h(\mathcal V)=\sum_{v_i\in \mathcal V}\sum_{v_j\in \mathcal I \setminus \mathcal V, i\neq j} g_{ij} \\
& \Rightarrow \ h(\mathcal V_1)=\sum_{v_i\in \mathcal V_1}\sum_{v_j\in \mathcal V_2 \setminus \mathcal V_1,i\neq j} g_{ij}+ \sum_{v_i\in \mathcal V_1}\sum_{v_j\in \mathcal I \setminus \mathcal V_2,i\neq j} g_{ij}\\
& \Rightarrow \ h(\mathcal V_2)=\sum_{v_i\in \mathcal V_2\setminus\mathcal V_1} \sum_{v_j\in \mathcal I \mathcal \setminus V_2, i\neq j} g_{ij} + \sum_{v_i\in \mathcal V_1}\sum_{v_j\in \mathcal I \setminus \mathcal V_2,i\neq j} g_{ij}\\
&Since \ in \ our \ case, \ |\mathcal V_1|, |\mathcal V_2| \ll \mathcal I \\
& \Rightarrow \ \sum_{v_i\in \mathcal V_1}\sum_{v_j\in \mathcal V_2 \setminus \mathcal V_1,i\neq j} g_{ij}\leq \sum_{v_i\in \mathcal V_2\setminus\mathcal V_1} \sum_{v_j\in \mathcal I \setminus \mathcal  V_2, i\neq j} g_{ij}\\
& \Rightarrow \ h(\mathcal V_1)\leq h(\mathcal V_2)
\end{aligned}
\end{equation*}

As indicated in \textbf{Definition 3}, to discover optimal canonical
view set, the function $h(\mathcal V)$ should be maximized in theory. However,
since $h(\mathcal V)$ is submodular function, the maximization of it
is a NP-complete optimization problem. Fortunately, $h(\mathcal V)$ is monotonically nondecreasing
with a cardinality constraint as indicated by two above Lemmas. Canonical views can be discovered near optimally
by greedy strategy as following steps\vspace{-3mm}\newline

\begin{algorithm}
\caption{Canonical view mining}
\label{cvm}
\begin{algorithmic}[1]
\STATE Extract visual feature for all landmark images. \\
\STATE Set canonical view set as empty, $\mathcal C=\emptyset$.\\
\FOR{$t=1$ to $T$}
\STATE Compute $\texttt{diff}(\mathcal I_{n})=h(\mathcal C \cup \mathcal I_{n})-h(\mathcal I_{n})$
for each landmark image $\mathcal I_{n}\in \mathcal I$. \\
\STATE Select the image with the maximum $\texttt{diff}$ into $\mathcal C$, $\mathcal I^*=\arg\max_{I_n\in \mathcal I}\texttt{diff}(I_n)$, and
simultaneously remove it from $\mathcal I$, $\mathcal C \leftarrow \mathcal C \cup \mathcal I^*$, $\mathcal I \leftarrow \mathcal I \setminus \mathcal I^*$.\\
\ENDFOR
\end{algorithmic}
\end{algorithm}
\begin{theorem}
\cite{AAAItheorem} Let $\mathcal S^*$ denote the global optimal solution that solves the combinatorial optimization problem $\arg\max_{\mathcal S\subseteq \mathcal L, |\mathcal S|=T} h(\mathcal S)$, $\mathcal S$ denote approximate solution found by the greedy algorithm. If $h(\mathcal S)$ is nondecreasing
submodular function with $h(\emptyset)=0$, we can have
\begin{equation*}
\begin{aligned}
\small
h(\mathcal S)\geq h(\mathcal S^*)\frac{\zeta-1}{\zeta}
\end{aligned}
\end{equation*}
where $\zeta$ refers to the natural exponential.
\end{theorem}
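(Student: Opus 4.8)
\emph{Proof proposal.} The plan is to run the classical greedy analysis for a nondecreasing submodular objective under a cardinality constraint, instantiated here with $h$. Denote by $\mathcal S_0=\emptyset\subseteq\mathcal S_1\subseteq\dots\subseteq\mathcal S_T=\mathcal S$ the nested sets produced by Algorithm~\ref{cvm}, so that $|\mathcal S_t|=t$ and $\mathcal S_t$ is obtained from $\mathcal S_{t-1}$ by adding the image of largest marginal value $h(\mathcal S_{t-1}\cup v)-h(\mathcal S_{t-1})$. Introduce the optimality gap $\delta_t:=h(\mathcal S^*)-h(\mathcal S_t)$. The target is the one-step contraction
\[
\delta_{t+1}\ \le\ \Bigl(1-\tfrac1T\Bigr)\,\delta_t ,
\]
since iterating it from $t=0$, together with $h(\emptyset)=0$ (hence $\delta_0=h(\mathcal S^*)$) and the elementary bound $(1-1/T)^T\le e^{-1}=1/\zeta$, gives $\delta_T\le (1/\zeta)\,h(\mathcal S^*)$, and therefore $h(\mathcal S)=h(\mathcal S^*)-\delta_T\ge\frac{\zeta-1}{\zeta}\,h(\mathcal S^*)$.

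The heart of the argument is a lemma bounding the optimum by the marginal gains still available at the current greedy set: for every $t$,
\[
h(\mathcal S^*)\ \le\ h(\mathcal S_t)\ +\ \sum_{v\in\mathcal S^*}\bigl(h(\mathcal S_t\cup v)-h(\mathcal S_t)\bigr).
\]
I would prove this in three moves. First, by monotonicity (\textbf{Lemma~2}), $h(\mathcal S^*)\le h(\mathcal S^*\cup\mathcal S_t)$. Second, writing $\mathcal S^*\setminus\mathcal S_t=\{w_1,\dots,w_k\}$ in any order, putting $W_j:=\{w_1,\dots,w_j\}$ and $W_0:=\emptyset$, and telescoping,
\[
h(\mathcal S^*\cup\mathcal S_t)-h(\mathcal S_t)=\sum_{j=1}^{k}\bigl(h(\mathcal S_t\cup W_j)-h(\mathcal S_t\cup W_{j-1})\bigr).
\]
Third, submodularity (\textbf{Lemma~1}) bounds each increment by the single-element marginal $h(\mathcal S_t\cup w_j)-h(\mathcal S_t)$, and appending the remaining terms for $v\in\mathcal S^*\cap\mathcal S_t$ (each nonnegative by monotonicity) only enlarges the sum; collecting these yields the lemma.

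It remains to combine the lemma with the greedy rule. Since Algorithm~\ref{cvm} selects at step $t+1$ an image of maximum marginal gain over $\mathcal S_t$, we have $h(\mathcal S_{t+1})-h(\mathcal S_t)\ge h(\mathcal S_t\cup v)-h(\mathcal S_t)$ for every $v\in\mathcal S^*$; summing this over the $|\mathcal S^*|=T$ elements of $\mathcal S^*$ and substituting into the lemma gives $h(\mathcal S^*)-h(\mathcal S_t)\le T\bigl(h(\mathcal S_{t+1})-h(\mathcal S_t)\bigr)$, i.e. $\delta_t-\delta_{t+1}\ge\delta_t/T$, which is exactly the claimed contraction. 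The only real obstacle is the lemma: one must convert the single inequality $h(\mathcal S^*)\le h(\mathcal S^*\cup\mathcal S_t)$ into a sum of per-element marginals, and this conversion is where submodularity is indispensable — it is precisely the diminishing-returns inequality of \textbf{Lemma~1} that lets the telescoped increments be dominated by the one-element marginals that the greedy selection can control. Everything afterward is a routine geometric recursion, the sole numerical point being the standard estimate $(1-1/T)^T\le 1/\zeta$ and the use of $h(\emptyset)=0$ to initialize $\delta_0=h(\mathcal S^*)$.
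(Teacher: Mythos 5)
Your proposal is correct: it is precisely the classical Nemhauser--Wolsey--Fisher greedy analysis — bound $h(\mathcal S^*)$ by $h(\mathcal S_t)$ plus the single-element marginal gains via monotonicity, telescoping, and the diminishing-returns inequality, then use the greedy selection rule to get $\delta_{t+1}\le(1-1/T)\,\delta_t$ and the estimate $(1-1/T)^T\le 1/\zeta$ with $\delta_0=h(\mathcal S^*)$ from $h(\emptyset)=0$. The paper offers no proof of its own for this theorem (it is quoted directly from \cite{AAAItheorem}), and your argument is exactly the standard proof given in that reference, so the two approaches coincide.
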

As validated by \textbf{Theorem 1}, this greedy algorithm can achieve a result that is no worse than a constant
fraction $\frac{\zeta-1}{\zeta}$ away from the optimal value. The time complexity of canonical view mining is reduced to $O(NT)$.
Hence, the canonical view discovery process can be completed efficiently. Different modalities generally include complete information \cite{7565615,DBLP:conf/cvpr/ChangYYX16,yang2014exploiting,Cheng2016,ChangMLYH17,7442559}. Canonical views in different modalities may be different.
To comprehensively cover visual appearances of landmarks,
canonical view mining is performed in multiple modalities,
obtaining canonical view set $\{\mathcal C^p\}_{p=1}^P$, $P$ is number of modalities.
We concatenate features of canonical views and
construct a matrix $E^p=[e_1^p, ..., e_{T}^p]\in \mathbb{R}^{d_p\times T}$
in modality $p$, $d_p$ is the corresponding feature dimension.
\begin{figure}
\centering
\includegraphics[width=85mm]{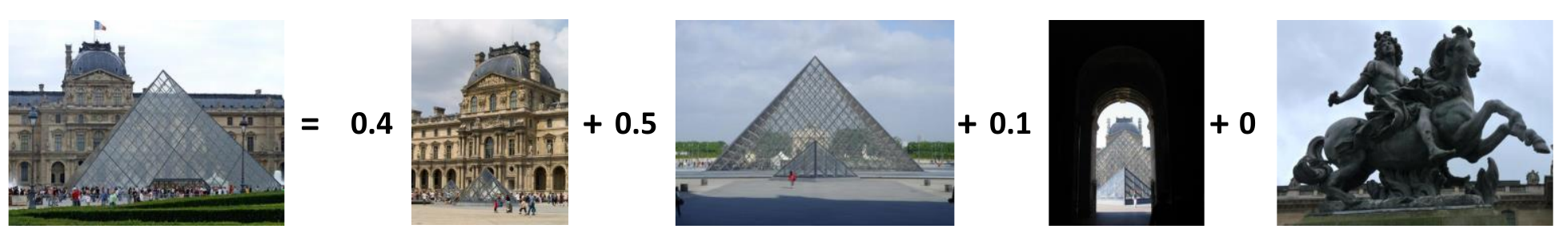}
\caption{Visual contents of an arbitrary query landmark image can be sparsely represented with its relations to canonical views.}
\label{fig:sample}
\vspace{-4mm}
\end{figure}
\subsection{Intermediate Representation Generation}
\label{sec:3:3}
With the discovered canonical views, we generate an intermediate representation for subsequent binary embedding.
As illustrated above, an arbitrary recorded landmark image
either describes a certain canonical view
or the cross-scenery among several particular canonical views.
In both cases, visual contents of the image can be sparsely represented with
its relations to several particular canonical views (as shown in Fig. \ref{fig:sample}). Sparse coding performs well on
robust representation with fix visual bases. In this paper, we leverage it for intermediate representation generation. Specifically, we calculate multi-modal sparse reconstruction coefficients
between image and canonical views, and the auto-generated response coefficients
are determined as the dimensions of intermediate representation. Principally, the intermediate representation can effectively characterize diverse visual
contents by adaptively adjusting the response coefficients on canonical views. Hence, it can construct a robust
foundation for subsequent discrete binary embedding. Mathematically, the concrete computation form is
\begin{equation}
\small
\begin{aligned}
\label{eq:afm}
\min_{\{Y^p\}_{p=1}^P} \ & \sum_{p=1}^P ||X^p-E^pY^p||_F^2 + \sigma\sum_{p=1}^P\sum_{n=1}^N||d_n^p\otimes y_n^p||_F^2 \\
s.t. \ & 1_T^\texttt{T}y_n^p=1, d_n^p=\texttt{exp}(\frac{\texttt{dist}(x_n^p, E^p)}{\rho}), \forall p, n \\
\end{aligned}
\end{equation}

\noindent where the first term measures the sparse reconstruction errors from raw features to exemplars, the second term is locality adaptor that ensures the generated sparse codes are proportional to the similarities between their corresponding raw features and exemplars. $\sigma>0$ is a constant factor that adjusts the balance between terms, $\rho$ is set to be the mean of pairwise distances, $\otimes$ denotes the element-wise product, $1_T\in \mathbb{R}^\texttt{T}$ denotes a column vector with all ones. $X^p=[x_1^p,..., x_N^p]\in \mathbb{R}^{d_p\times N}$ denotes
features of database images in modality $p$. $Y^p=[y_1^p,...,y_N^p]\in \mathbb{R}^{T\times N}$
denotes modality-specific canonical view based intermediate representation. Each column
has $r$ non-zeros coding coefficients.
$d_n^p$ is a vector that measures the distances between raw feature and exemplars,
$\texttt{dist}(x_n^p, E^p)= [\texttt{dist}(x_n^p, e_1^p), ..., \texttt{dist}(x_n^p, e_{T}^p)]$, $\texttt{dist}(x_n^p, e_1^p)$ is Euclidean distance between
$x_n^p$ and $e_{T}^p$.
The problem in (\ref{eq:afm}) can be efficiently solved by using the alternating direction method of multipliers (ADMM) \cite{PAMISSC}. After solving it, we concatenate the calculated $Y^p$ and construct dimensions of intermediate representation
\begin{equation}
\label{calculatey}
\small
\begin{aligned}
Y=[Y^1; ... ; Y^P]\in \mathbb{R}^{TP\times N}
\end{aligned}
\end{equation}


\subsection{Discrete Binary Embedding Model}
\label{sec:3:4}
Based on the intermediate representation, we design a discrete binary embedding model to
learn final binary hashing codes. Let us define $V\in [-1, 1]^{c\times N}$ as the hashing codes of database images,
$c$ is the hashing code length. Due to approximate canonical view mining, the intermediate representation
generation inevitably brings about noises from inaccurate canonical views.
In addition, as a result of information integration
from multiple modalities, there exist information redundancies among dimensions of
intermediate representation calculated from different modalities. Therefore, it is very important to remove the involved noises and redundancies
during the binary embedding. To achieve this goal, we propose latent structure learning
with matrix factorization to extract orthogonal Hamming space. In this space, each dimension corresponds to one hashing bit. Besides, we construct a canonical view based graph \cite{7547296,JGLVS,GWC} to measure visual relationships among images captured by canonical views. That is, if two landmark images have similar visual distributions on intermediate representations, they are forced to be projected to close points in hamming space. Moreover, we learn linear projection based hashing functions to support queries that are out of the database.

By integrating the aforementioned considerations, we derive the overall discrete binary embedding formulation as
\begin{equation}
\small
\label{disobj}
\begin{aligned}
\min_{W, U, V} \ & ||Y-UV||_F^2 + \alpha Tr(VLV^{\texttt{T}}) + \beta(||V- W^{\texttt{T}}Y||_F^2+\gamma ||W||_F^2) \\
s.t. \ & V\in \{-1,1\}^{c\times N}, VV^{\texttt{T}}=NI_c, V1_N=0
\end{aligned}
\end{equation}

\noindent where $\alpha, \beta, \gamma>0$ adjust the balance of terms. $V\in [-1, 1]^{c\times N}$ is discrete constraint on hashing codes.
In addition, we also consider the bit-uncorrelated constraint $VV^{\texttt{T}}=NI_c$ which
guarantees that the learned Hamming space to be orthogonal for information redundancy removal, and bit balance constraint $V1_N=0$
which forces each bit to have equal chance to occur in the whole image database. $||Y-UV||_F^2$
is for latent structure learning, $U$ is mapping between latent structure and intermediate representation.
$Tr(VLV^{\texttt{T}})$ is graph regularizer which
preserves visual relations of landmark images measured by canonical views.
$Tr(\cdot)$ is trace operation, $L$ is Laplacian matrix calculated on canonical view based graph, it is
constructed based on intermediate representation, to measure the visual relations of images
on canonical views. $||V-W^{\texttt{T}}Y||_F^2+\gamma ||W||_F^2$ learns
linear projection matrix $W\in \mathbb{R}^{TP\times c}$ equipped in hashing functions,
$||V-W^{\texttt{T}}Y||_F^2$ is to reduce the loss between
binary codes and the projected values. It is worth noting that,
as linear projection is leveraged, the online mobile landmark search process is efficient.


The optimal $W$ that solves Eq.(\ref{disobj}) can be expressed in terms of $Y$. We can derive the following theorem.
\begin{theorem}
Let $W, U, V$ and $Y$ be defined as before. Then the optimal $W$ that solves learning problem in (\ref{disobj}) is given by
$W=(YY^{\texttt{T}}+\gamma I)^{-1}YV^{\texttt{T}}$.
The minimum problem in (\ref{disobj}) is equivalent to the following problem
\begin{equation}
\small
\label{obj:1}
\begin{aligned}
\min_{U, V} \ & ||Y-UV||_F^2+\alpha Tr(VAV^{\texttt{T}}) \\
s.t. \ & V\in \{-1,1\}^{c\times N}, VV^{\texttt{T}}=NI_c, V1_N=0
\end{aligned}
\end{equation}
\noindent where $A=L+\frac{\beta}{\alpha}(I-Y^{\texttt{T}}QY), Q=(YY^{\texttt{T}}+\gamma I)^{-1}$.
\end{theorem}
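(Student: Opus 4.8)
\emph{Proof proposal.}
The plan is to eliminate $W$ from the objective in (\ref{disobj}): once $U$ and $V$ are held fixed, the minimization over $W$ is an unconstrained ridge-regression problem with a closed-form solution, and substituting that solution back absorbs the entire $\beta$-term into the trace regularizer, leaving a problem in $U$ and $V$ alone.

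First I would collect the only terms of (\ref{disobj}) that contain $W$, namely $\beta(\|V-W^{\texttt{T}}Y\|_F^2+\gamma\|W\|_F^2)$, rewrite them with the identity $\|Z\|_F^2=Tr(ZZ^{\texttt{T}})$, take the gradient with respect to $W$, and set it to zero. This gives the normal equation $(YY^{\texttt{T}}+\gamma I)W=YV^{\texttt{T}}$. Since $\gamma>0$, the matrix $YY^{\texttt{T}}+\gamma I$ is positive definite and hence invertible, so the unique minimizer is $W=(YY^{\texttt{T}}+\gamma I)^{-1}YV^{\texttt{T}}=QYV^{\texttt{T}}$ with $Q=(YY^{\texttt{T}}+\gamma I)^{-1}$. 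The $W$-subproblem is convex, so this stationary point is a global minimum, and substituting it back preserves the optimal value of (\ref{disobj}).

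Next I would plug $W=QYV^{\texttt{T}}$ back into $\|V-W^{\texttt{T}}Y\|_F^2+\gamma\|W\|_F^2$. Setting $B:=Y^{\texttt{T}}QY$, which is symmetric because $Q$ is, we have $W^{\texttt{T}}Y=VB$, so $\|V-W^{\texttt{T}}Y\|_F^2=Tr(V(I-B)^2V^{\texttt{T}})$ and $\gamma\|W\|_F^2=\gamma\,Tr(VY^{\texttt{T}}Q^2YV^{\texttt{T}})$. Adding these, the claim reduces to the matrix identity $(I-B)^2+\gamma Y^{\texttt{T}}Q^2Y=I-B$, equivalently $B-B^2=\gamma Y^{\texttt{T}}Q^2Y$. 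The key observation is that $YY^{\texttt{T}}Q=(YY^{\texttt{T}}+\gamma I)Q-\gamma Q=I-\gamma Q$, whence $B-B^2=Y^{\texttt{T}}Q(I-YY^{\texttt{T}}Q)Y=Y^{\texttt{T}}Q(\gamma Q)Y=\gamma Y^{\texttt{T}}Q^2Y$, which settles the identity. Consequently $\beta(\|V-W^{\texttt{T}}Y\|_F^2+\gamma\|W\|_F^2)$ collapses to $\beta\,Tr(V(I-Y^{\texttt{T}}QY)V^{\texttt{T}})$.

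Finally I would merge this with $\alpha\,Tr(VLV^{\texttt{T}})$ to obtain $\alpha\,Tr(VAV^{\texttt{T}})$ with $A=L+\frac{\beta}{\alpha}(I-Y^{\texttt{T}}QY)$; since the constraints on $V$ are untouched and $U$ is still unconstrained, the reduced objective is exactly (\ref{obj:1}). The only step that is not pure bookkeeping is recognizing the identity $YY^{\texttt{T}}Q=I-\gamma Q$ and using it to verify $B-B^2=\gamma Y^{\texttt{T}}Q^2Y$; everything else is routine manipulation of Frobenius norms and traces.
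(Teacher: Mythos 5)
Your proposal is correct and follows essentially the same route as the paper: eliminate $W$ by setting the gradient of the $\beta$-term to zero, obtaining $W=(YY^{\texttt{T}}+\gamma I)^{-1}YV^{\texttt{T}}$, then substitute back so that $\|V-W^{\texttt{T}}Y\|_F^2+\gamma\|W\|_F^2$ collapses to $Tr\bigl(V(I-Y^{\texttt{T}}QY)V^{\texttt{T}}\bigr)$ and merges with the graph term into $\alpha Tr(VAV^{\texttt{T}})$. The only difference is that you explicitly verify the back-substitution identity via $YY^{\texttt{T}}Q=I-\gamma Q$, a detail the paper's proof states without derivation.
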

\begin{proof}


By calculating the derivation of
the objective function in Eq.(\ref{disobj}) \emph{w.r.t} $W$ and setting it to be zero, we can have
\begin{equation}
\small
\label{eq:3:1}
\begin{aligned}
W=(YY^{\texttt{T}}+\gamma I)^{-1}YV^{\texttt{T}} \\
\end{aligned}
\end{equation}
Let $Q=(YY^{\texttt{T}}+\gamma I)^{-1}$, then $W=QYV^{\texttt{T}}$.
By replacing $W$ into Eq.(\ref{disobj}), we can derive that
\begin{equation*}
\small
\label{formuM}
\begin{aligned}
||V-W^{\texttt{T}}Y||_F^2+\gamma ||W||_F^2= Tr(V(I_N-Y^{\texttt{T}}QY)V^{\texttt{T}})\\
\end{aligned}
\end{equation*}

By summing three terms together, we derive that
\begin{equation*}
\small
\begin{aligned}
&\ ||Y-UV||_F^2 + \alpha Tr(VLV^{\texttt{T}})+ \beta Tr(V(I-Y^{\texttt{T}} \\
&(YY^{\texttt{T}}+\gamma I)^{-1}Y)V^{\texttt{T}}) = ||Y-UV||_F^2+\alpha Tr(VAV^{\texttt{T}})\\
\end{aligned}
\end{equation*}
\noindent where $A=L+\frac{\beta}{\alpha}(I-Y^{\texttt{T}}QY), Q=(YY^{\texttt{T}}+\gamma I)^{-1}$. This completes the proof of the theorem.
\end{proof}

With \textbf{Theorem 2}, the problem in (\ref{disobj}) is transformed to
\begin{equation}
\small
\begin{aligned}
\label{disretobject}
&\min_{U, V} ||Y-UV||_F^2+\alpha Tr(VAV^{\texttt{T}})\\
s.t. \ & V\in \{-1,1\}^{c\times N}, VV^{\texttt{T}}=NI_c, V1_N=0
\end{aligned}
\end{equation}

The above objective function can be abstracted as graph-based matrix factorization \cite{TPAMINMF}. The main difference between our formulation and \cite{TPAMINMF} is the constraints imposed on hashing codes. These three constraints bring new challenges to the optimization process which have not yet been touched by existing graph-based matrix factorization methods.
\subsection{Discrete Solution}
\label{sec:3:5}
Essentially, solving problem (\ref{disretobject}) is  a challenging combinatorial optimization problem due to three constraints. Most existing hashing approaches apply ``relaxing+rounding" optimization \cite{hashingsurvey}. They first relax three constraints to calculate continuous values, and then binarize them to hashing codes via rounding. This two-step learning can actually simplify the solving process, but it may cause significant information loss \cite{SDH,DGH}. In recent literature, several discrete hashing solutions are proposed. However, they are developed for particular hashing types. For example, graph hashing \cite{DGH}, supervised hashing \cite{SDH,CSBDSH,KSDH}, cross-modal hashing \cite{CDOECVR}. Therefore, their designed learning strategies cannot be directly applied to solve our problem.

In this paper, we propose an effective optimization algorithm based on augmented Lagrangian multiplier (ALM) \cite{ALMone} to calculate the discrete solution within one step. Our method can not only explicitly deal with the discrete constraint, but also consider the bit-uncorrelated constraint and balance constraint together. Our key idea is adding auxiliary variables $\Gamma$, $\Theta$ to separate three challenging constraints, and transform the objective function to an equivalent one. The substituted variables and corresponding auxiliary ones are forced to be close with each other. Specifically, in this paper, we set $\Gamma=Y-UV, \Theta=V$. Problem (\ref{disretobject}) is transformed as
\begin{equation}
\small
\begin{aligned}
\label{eq:tj}
& \min_{U, V, \Gamma, \Theta} \ ||\Gamma||_F+ \frac{\eta}{2} ||Y-UV-\Gamma+\frac{E_{\eta}}{\eta}||_F^2 \\
& \alpha Tr(VA\Theta^{\texttt{T}}) + \frac{\mu}{2} ||V-\Theta+\frac{E_{\mu}}{\mu}||_F^2 \\
& s.t. \quad V\in \{-1,1\}^{c\times N}, \Theta\Theta^\texttt{T}=NI_c, \Theta1_N=0
\end{aligned}
\end{equation}
\noindent where $E_{\eta}, E_{\mu}$ measure the difference between the target and auxiliary variables, $\eta, \mu>0$ adjust the balance between terms.
We adopt alternate optimization to iteratively solve problem (\ref{eq:tj}). We optimize the objective function with respective to one variable while fixing other remaining variables. The key steps for solving $Z$ are summarized in Algorithm \ref{calv}.
\begin{algorithm}
\caption{Obtaining $V$ via solving problem (\ref{eq:tj})}
\label{calv}
\begin{algorithmic}[1]
\STATE Initialize $E_{\eta}$, $E_{\mu}$, $V$; \\
\WHILE{not convergence}
\STATE Optimize $\Gamma$ while fixing the others with Eq.(\ref{eq:tl});
\STATE Optimize $U$ while fixing the others with Eq.(\ref{eq:tp});
\STATE Optimize $\Theta$ while fixing the others with Eq.(\ref{eq:optitheta});
\STATE Optimize $V$ while fixing the others with Eq.(\ref{eq:tz});
\STATE Update $E_{\eta}$, $E_{\mu}$, $\eta$, $\mu$ with Eq.(\ref{eq:tu}); \\
\ENDWHILE
\end{algorithmic}
\end{algorithm}

\textbf{Step 3: Update $\Gamma$.} By fixing other variables, the optimization formulas for $\Gamma$ are
\begin{equation}
\small
\begin{aligned}
\label{eq:tk}
&\min_{\Gamma} \ ||\Gamma||_F+ \frac{\eta}{2} ||Y-UV-\Gamma+\frac{E_{\eta}}{\eta}||_F^2)
\end{aligned}
\end{equation}
\noindent By calculating the derivative of the objective function with respective to $\Gamma$, and setting it to 0, we can obtain that
\begin{equation}
\small
\begin{aligned}
\label{eq:tl}
\Gamma=\frac{\eta Y-\eta UV+E_{\eta}}{2+\eta}
\end{aligned}
\end{equation}

\textbf{Step 4: Update $U$}. The optimization formula for $U$ is
\begin{equation}
\small
\begin{aligned}
\label{eq:tn}
\min_{U} \ & ||Y-UV-\Gamma + \frac{E_{\eta}}{\eta}||_F^2 \\
\end{aligned}
\end{equation}
\noindent By calculating the derivative of the objective function with respective to $U$, and setting it to 0, we can obtain that
\begin{equation}
\small
\begin{aligned}
UV = Y-\Gamma+\frac{E_{\eta}}{\eta} \\
\end{aligned}
\end{equation}

\noindent Since $VV^\texttt{T}=NI_c$, we can further derive that
\begin{equation}
\small
\begin{aligned}
\label{eq:tp}
U = \frac{1}{N}(Y-\Gamma+\frac{E_{\eta}}{\eta})V^\texttt{T}
\end{aligned}
\end{equation}

\textbf{Step 5: Update $\Theta$}. The optimization formula for $\Theta$ is
\begin{equation}
\small
\begin{aligned}
\label{eq:tq}
\min_{\Theta} \ & \alpha Tr(VA\Theta^\texttt{T})+\frac{\mu}{2}||V-\Theta+\frac{E_{\mu}}{\mu}||_F^2 \\
& s.t. \ \Theta\Theta^\texttt{T}=NI_c, \Theta1_N=0
\end{aligned}
\end{equation}
\noindent The objective function in Eq.(\ref{eq:tq}) can be simplified as
\begin{equation}
\small
\begin{aligned}
\label{eq:tre}
\min_{\Theta} & \ ||\Theta-(V+\frac{E_{\mu}}{\mu}-\frac{\alpha}{\mu}VA)||_F^2\\
\end{aligned}
\end{equation}
\noindent where $C=V+\frac{E_{\mu}}{\mu}-\frac{\alpha}{\mu}VA$.
The above equation is equivalent to the following maximization problem
\begin{equation}
\small
\begin{aligned}
\label{eq:ts}
\max_{\Theta} \ & Tr(\Theta^\texttt{T}C) \quad s.t. \ \Theta\Theta^\texttt{T}=NI_c, \Theta1_N=0
\end{aligned}
\end{equation}
Mathematically, with singular value decomposition (SVD), $C$ can be decomposed as $C=P\Lambda Q^\texttt{T}$
, where the columns of $P$ and $Q$ are left-singular vectors and right-singular vectors of $C$ respectively, $\Lambda$ is rectangular diagonal matrix and its diagonal entries are singular values of $C$. Then, we can derive that $\max_{V} \ Tr(\Theta^\texttt{T}P\Lambda Q^\texttt{T}) \Leftrightarrow \max_{V} \ Tr(\Lambda Q^\texttt{T}\Theta^\texttt{T}P)$.
\begin{theorem}\label{calzthre}
Given any matrix $G$ which meets $GG^\texttt{T}=NI$ and diagonal matrix $\Lambda \ge 0$, the solution of $\max_{G} Tr(\Lambda G)$ is $\texttt{diag}(\sqrt{N})$.
\end{theorem}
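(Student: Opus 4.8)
The plan is to split the argument into a matching upper bound and construction, both flowing from a single structural observation: the constraint $GG^\texttt{T}=NI$ fixes the squared Euclidean norm of every row of $G$. Indeed, the $i$-th diagonal entry of $GG^\texttt{T}$ equals $\sum_j G_{ij}^2$, so $\sum_j G_{ij}^2=N$ for each row $i$; in particular the diagonal entry obeys $G_{ii}\le|G_{ii}|\le\sqrt{N}$.

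Given this, I would expand the objective entrywise. Writing $\Lambda=\texttt{diag}(\lambda_1,\lambda_2,\dots)$ with every $\lambda_i\ge0$, one has $Tr(\Lambda G)=\sum_i\lambda_iG_{ii}$; bounding each summand using $\lambda_i\ge0$ together with $G_{ii}\le\sqrt{N}$ gives
\[
Tr(\Lambda G)\ \le\ \sqrt{N}\sum_i\lambda_i\ =\ \sqrt{N}\,Tr(\Lambda),
\]
a constant independent of $G$. This establishes the upper bound.

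To see the bound is tight at a feasible point, take $G$ whose diagonal entries are all $\sqrt{N}$ and whose remaining entries are chosen so that each row still has norm $\sqrt{N}$ and distinct rows are mutually orthogonal; the canonical instance is $G=\sqrt{N}I$ in the square case and $G=[\sqrt{N}I\ \ 0]$ otherwise. Such a $G$ satisfies $GG^\texttt{T}=NI$ and attains $Tr(\Lambda G)=\sqrt{N}\,Tr(\Lambda)$, so the maximum equals this value and is achieved exactly when the diagonal of $G$ is $\texttt{diag}(\sqrt{N})$, which is the claimed solution.

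I expect the only genuine care to lie in that last step: verifying that prescribing the diagonal to be $\sqrt{N}$ is still compatible with the orthogonality constraint $GG^\texttt{T}=NI$ (the explicit completion above handles this), and noting that when some $\lambda_i=0$ the corresponding rows of a maximizer are not pinned down beyond the row-norm condition, so the statement is best read as identifying the diagonal of any optimal $G$.
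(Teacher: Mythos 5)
Your proof is correct and follows essentially the same route as the paper's: use $GG^{\texttt{T}}=NI$ to bound each diagonal entry $g_{ii}$ by $\sqrt{N}$, bound $Tr(\Lambda G)=\sum_i\lambda_{ii}g_{ii}\leq\sqrt{N}\sum_i\lambda_{ii}$, and identify equality at a diagonal of $\sqrt{N}$. Your added remarks (the explicit feasible completion and the caveat when some $\lambda_{ii}=0$) are slight refinements of the paper's equality claim, not a different argument.
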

\begin{proof}
Let us assume $\lambda_{ii}$ and $g_{ii}$ are the $i_{th}$ diagonal entry of $\Lambda$ and $G$ respectively, $Tr(\Lambda G)=\sum_{i}\lambda_{ii}g_{ii}$. Since $GG^\texttt{T}=NI$, $g_{ii}\leq \sqrt{N}$. $Tr(\Lambda G)=\sum_{i}\lambda_{ii}g_{ii}\leq \sqrt{N}\sum_{i}\lambda_{ii}$. The equality holds only when $g_{ii}=\sqrt{N}, g_{ij}=0, \forall i,j$. $Tr(\Lambda G)$ achieves its maximum when $G=\texttt{diag}(\sqrt{N})$.
\end{proof}

As $\Lambda$ is calculated by SVD, we can obtain that $\Lambda \ge 0$. On other hand, we can easily derive that $Q^\texttt{T}\Theta^\texttt{T}PP^\texttt{T}\Theta Q=NI$. Therefore, according to the \textbf{Theorem \ref{calzthre}}, the optimal $\Theta$ can only be obtained when $Q^\texttt{T}\Theta^\texttt{T}P=\texttt{diag}(\sqrt{N})$. Hence, the solution of $\Theta$ is
\begin{equation}
\small
\begin{aligned}
\label{eq:solutiony}
\Theta=\sqrt{N}PQ^\texttt{T}
\end{aligned}
\end{equation}
Moreover, in order to satisfy the balance constraint $\Theta1_N=0$, we apply Gram-Schmidt process as \cite{DGH} to construct matrices $\hat{P}$ and $\hat{Q}$, so that $\hat{P}^\texttt{T}\hat{P}=I_{L-R}$, $[P, 1]^\texttt{T}\hat{P}=0$, $\hat{Q}^\texttt{T}\hat{Q}=I_{L-R}$, $Q\hat{Q}^\texttt{T}=0$, $R$ is the rank of $C$. The close form solution for $\Theta$ is
\begin{equation}
\small
\begin{aligned}
\label{eq:optitheta}
\Theta=\sqrt{N}[P, \hat{P}][Q, \hat{Q}]^\texttt{T}
\end{aligned}
\end{equation}

\textbf{Step 6: Update $V$}. By fixing other variables, the optimization formula for $V$ is
\begin{equation}
\small
\begin{aligned}
\label{eq:tr}
\min_{V\in [-1, 1]^{c\times N}} \ & \frac{\eta}{2}||Y-UV-\Gamma+\frac{E_{\eta}}{\eta}||_F^2 +\alpha Tr(VA\Theta^{\texttt{T}})\\
& +\frac{\mu}{2} ||V-\Theta+\frac{E_{\mu}}{\mu}||_F^2 
\end{aligned}
\end{equation}
\noindent The above problem can be transformed as
\begin{equation}
\small
\begin{aligned}
\label{eq:trrr}
\min_{V\in [-1, 1]^{c\times N}} &  \ ||V-(\Theta-\frac{E_{\mu}}{\mu}-\frac{\alpha}{\mu}\Theta A + \frac{\eta}{\mu} U^\texttt{T}(Y-\Gamma+\frac{E_{\eta}}{\eta}))||_F^2
\end{aligned}
\end{equation}

The discrete solution of $V$ can be directly represented as
\begin{equation}
\small
\begin{aligned}
\label{eq:tz}
V=\texttt{Sgn}(\Theta-\frac{E_{\mu}}{\mu}-\frac{\alpha}{\mu}\Theta A + \frac{\eta}{\mu} U^\texttt{T}(Y-\Gamma+\frac{E_{\eta}}{\eta}))
\end{aligned}
\end{equation}
\noindent where $\texttt{Sgn}(\cdot)$ is signum function which returns -1 if $x<0$, and 1 if $x\geq 0$.

\textbf{Step 7: Update $E_{\eta}$, $E_{\mu}$, $\eta$, $\mu$}. The update rules are
\begin{equation}
\small
\begin{aligned}
\label{eq:tu}
&E_{\eta}=E_{\eta}+\eta(Y-UV-\Gamma)\\
&E_{\mu}=E_{\mu}+\mu(V-\Theta)\\
&\eta=\rho\eta, \mu=\rho\mu\\
\end{aligned}
\end{equation}
$\rho>1$ is learning rate that controls the convergence.



\textbf{Hashing Function Learning.} After obtaining $V$, we substitute the value into $W=(YY^{\texttt{T}}+\gamma I)^{-1}YV^{\texttt{T}}$ and get the linear projection matrix. The hashing functions can be represented as $H(x)=\texttt{sgn}(W^{\texttt{T}}x)$.

\textbf{Online Mobile Landmark Search.} Given a landmark query image $q$ from mobile device, we first extract intermediate representation $Y_q$ as Eq.(\ref{calculatey}). Its binary hashing codes are calculated as $V_q=\texttt{sgn}(W^{\texttt{T}}Y_q)$.
\begin{algorithm}
\caption{CV-DMH based mobile landmark search}
\label{alg:summary}
\begin{algorithmic}[1]
\REQUIRE ~~\\
Query image $q$. Database images $\mathcal I= \{\mathcal I_n\}_{n=1}^N$. \\
\ENSURE ~~\\
Discrete hashing codes of database images $V$. Hashing functions $H$. Image retrieval results for image query $q$.\\
\emph{\textbf{Offline Learning}}
\STATE Extract multi-modal canonical views as illustrated in \ref{sec:3:2}; \\
\STATE Generate intermediate landmark representation as \ref{sec:3:3}; \\
\STATE Compute hashing codes of database images $V$ by solving problem (\ref{disretobject}) with Algorithm \ref{calv}; \\
\STATE Construct hashing functions $H$ with the projection matrix; \\
\emph{\textbf{Online Searching}}
\STATE Extract multi-modal visual features of query image and transform them to a unified intermediate representation; \\
\STATE Project query into hashing codes with the learned $H$; \\
\STATE Perform searching in Hamming space and return results. \\
\end{algorithmic}
\end{algorithm}

\subsection{Summarization and Computational Complexity Analysis}
The key steps of CV-DMH based MLS are described in Algorithm \ref{alg:summary}.
It can be easily derived that the computation cost of multi-modal canonical view mining is $O(NPT)$, as there are $P$ visual modalities
and $T$ iterations for mining. The part of intermediate generation solves a constrained least square fitting problem which also consumes $O(NPT)$.
The computational complexity of discrete optimization is $O(\#iter(TPN+ TPc + cN))$, where $\#iter$ denotes the number of iterations in Algorithm \ref{calv}. Given $N\gg TP>c$, this process scales linearly with $N$. The computation of hashing functions solves a linear system, which consumes $O(N)$.

\section{Experimental Configuration}
\label{sec:4}
\subsection{Experimental Datasets and Setting}
In this paper, three real landmark datasets, \emph{Oxford5K} \cite{CVPRoxford}, \emph{Paris6K} \cite{parisdataset}, and \emph{Paris500K} \cite{iccvlandmark}\footnote{In this experiment, the maximum number of images in each category is limited to 2000 to avoid bias.}, are applied in empirical study. \emph{Oxford5K} is comprised of 5,062 images recorded for 17 landmarks in Oxford.
\emph{Paris6K} consists of 6,412 \emph{Paris} landmark images in 12 categories.
\emph{Paris500K} contains 41,673 images with clustering ground truth which describes
79 landmarks. For \emph{Oxford5K} and \emph{Paris6K}, 10\%, 20\%, and 70\% images
are used as query images, training images, and database images, respectively.
For \emph{Paris500K}, the corresponding ratios are 10\%, 10\%, and 80\%. For three datasets, both
query and database images appear with great visual diversity. Each image is represented by features in 5 heterogenous visual modalities: 81-D Color Moments (CM) \cite{CM}, 58-D Local Binary Pattern (LBP) \cite{LBP}, 80-D Edge Direction Histogram (EDH)
\cite{EDH}, 1,000-D BoVW\footnote{128-D SIFT is employed as local
descriptor.} \cite{bovw}, and 512-D GIST \cite{GIST}.

\subsection{Evaluation Metrics}
\label{sec:4:2}
In our experimental study, mean average precision (mAP) is adopted as the evaluation metric for effectiveness.
The metric has been widely used in literature \cite{surveyhashing}. For a given query, average precision (AP) is calculated as
\begin{equation}
\label{mAP}
\begin{aligned}
AP=\frac{1}{NR}\sum_{r=1}^{R} pre(r)rel(r)
\end{aligned}
\end{equation}
\noindent where $R$ is the total number of retrieved images, $NR$ is the number of relevant images in retrieved set, $pre(r)$ denotes the precision of top $r$ retrieval images, which is defined as the ration between the number of the relevant images and the number of retrieved images $r$, and $rel(r)$ is indicator function which equals to 1 if the $r_{th}$ image is relevant to query, and 0 vice versa. mAP is defined as the average of the AP of all queries. Larger mAP indicates the better retrieval performance. In experiments, we set $R$ as 100 to collect experimental results. Furthermore, \emph{Precision-Scope} curve is also reported to reflect the retrieval performance variations with respect to the number of retrieved images.

\begin{table*}
\small
\caption{mAP of all approaches on three datasets. The best performance in each column is marked with bold.}
\label{resulttable}
\centering
\begin{tabular}{|p{14mm}<{\centering}|p{8mm}<{\centering}|p{8mm}<{\centering}|p{8mm}<{\centering}|p{10mm}<{\centering}
|p{8mm}<{\centering}|p{8mm}<{\centering}|p{8mm}<{\centering}|p{10mm}<{\centering}|p{8mm}<{\centering}|p{8mm}<{\centering}
|p{8mm}<{\centering}|p{10mm}<{\centering}|}
\hline
\multirow{2}{*}{Methods} & \multicolumn{4}{c|}{\emph{Oxford5K}} & \multicolumn{4}{c|}{\emph{Paris6K}} & \multicolumn{4}{c|}{\emph{Paris500K}}\\
\cline{2-13}
& 32 & 48 & 64  & 128  & 32 & 48 & 64  & 128 & 32 & 48 & 64  & 128 \\
\hline
AGH & \textbf{0.3164} & 0.2990 & 0.3098 & 0.3067 & 0.3301 & 0.3283 & 0.3524 & 0.3267 & 0.3427 & 0.3798 & 0.3872 & 0.3944 \\
\hline
ITQ & 0.2778 & 0.2800 & 0.2910 & 0.2917 & 0.2835 & 0.2942 & 0.3059 & 0.3275 & 0.2251 & 0.2765 & 0.3063 & 0.3621 \\
\hline
SGH & 0.2944 & 0.3000 & 0.3190 & 0.3315 & 0.3214 & 0.3497 & 0.3594 & 0.3793 & 0.3242 & 0.3718 & 0.4103 & 0.4620 \\
\hline
DPLM & 0.2985 & 0.3134 & 0.3112 & 0.3065 & 0.2706 & 0.2800 & 0.2962 & 0.3112 & 0.3044 & 0.3490 & 0.3663 & 0.3903 \\
\hline
CHMIS & 0.2959 & 0.3087 & 0.2991 & 0.3211 & 0.3256 & 0.3253 & 0.3265 & 0.3674 & 0.3937 & 0.4429 & 0.4681 & 0.5312 \\
\hline
MVAGH & 0.2947 & 0.3124 & 0.3105 & 0.3011 & 0.2646 & 0.2725 & 0.2882 & 0.3205 & 0.2907 & 0.3181 & 0.3403 & 0.3647 \\
\hline
MFH & 0.2718 & 0.2874 & 0.3027 & 0.3206 & 0.2897 & 0.3066 & 0.3108 & 0.3546 & 0.3537 & 0.4202 & 0.4461 & 0.5211 \\
\hline
CMKH & 0.2999 & 0.3034 & 0.2994 & 0.3026 & \textbf{0.3396} & 0.3351 & 0.3441 & 0.3481 & 0.4094 & 0.4680 & 0.5096 & 0.5493 \\
\hline
MVLH & 0.2918 & 0.3092 & 0.3051 & 0.3203 & 0.3152 & 0.3183 & 0.3531 & 0.3936 & 0.2965 & 0.3433 & 0.3570 & 0.4124 \\
\hline
MAVH & 0.2980 & 0.2856 & 0.2861 & 0.3021 & 0.2975 & 0.3188 & 0.3385 & 0.3565 & 0.2889 & 0.3177 & 0.3353 & 0.3876 \\
\hline
CV-DMH & 0.3028 & \textbf{0.3153} & \textbf{0.3275} & \textbf{0.3458} & 0.3350 & \textbf{0.3534} & \textbf{0.3799} & \textbf{0.4190} & \textbf{0.4533} & \textbf{0.5029} & \textbf{0.5293}  & \textbf{0.6025}\\
\hline
\end{tabular}
\end{table*}

\begin{figure*}
\centering
\mbox{
\subfigure{\includegraphics[width=42mm]{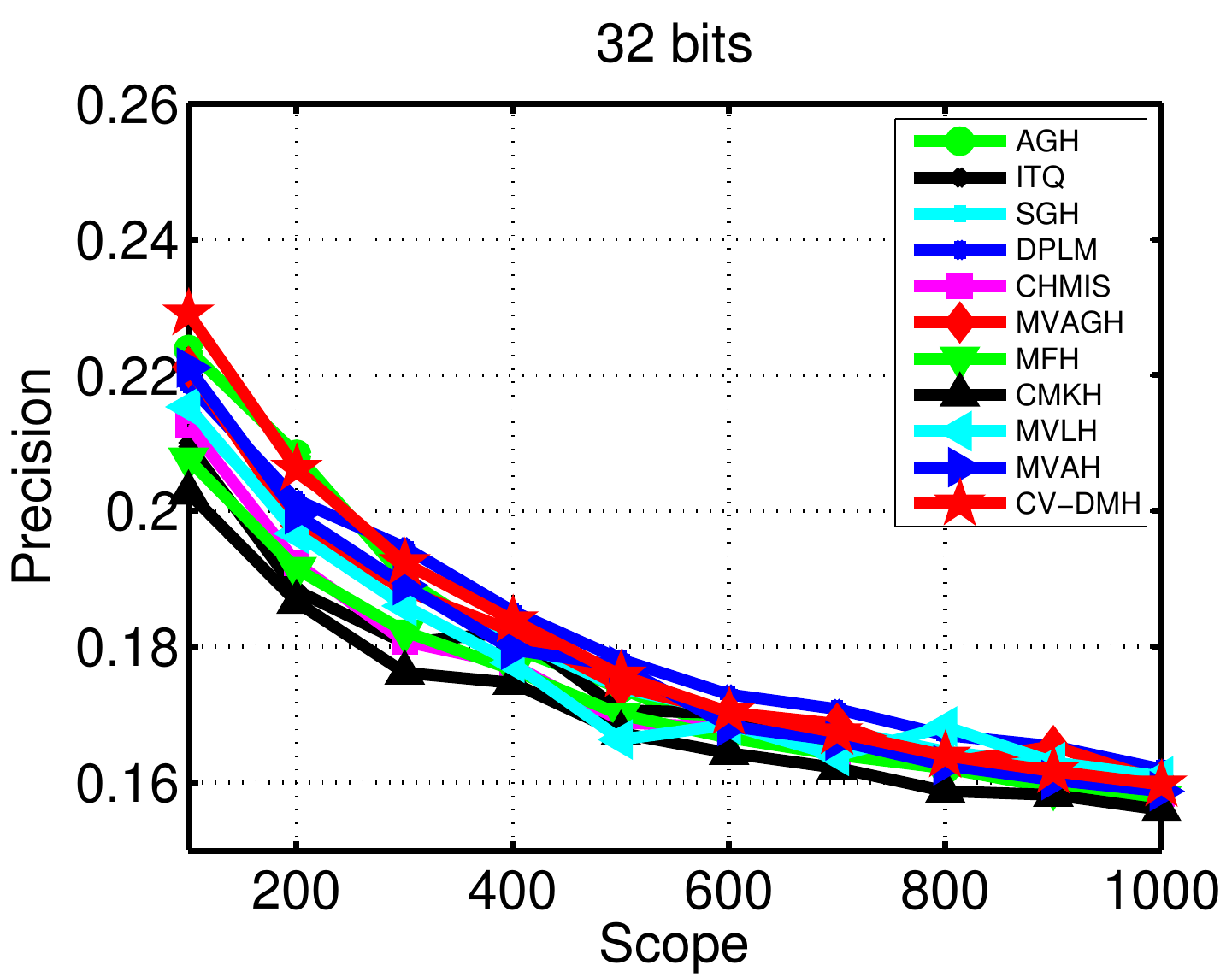}}
}\mbox{
\subfigure{\includegraphics[width=42mm]{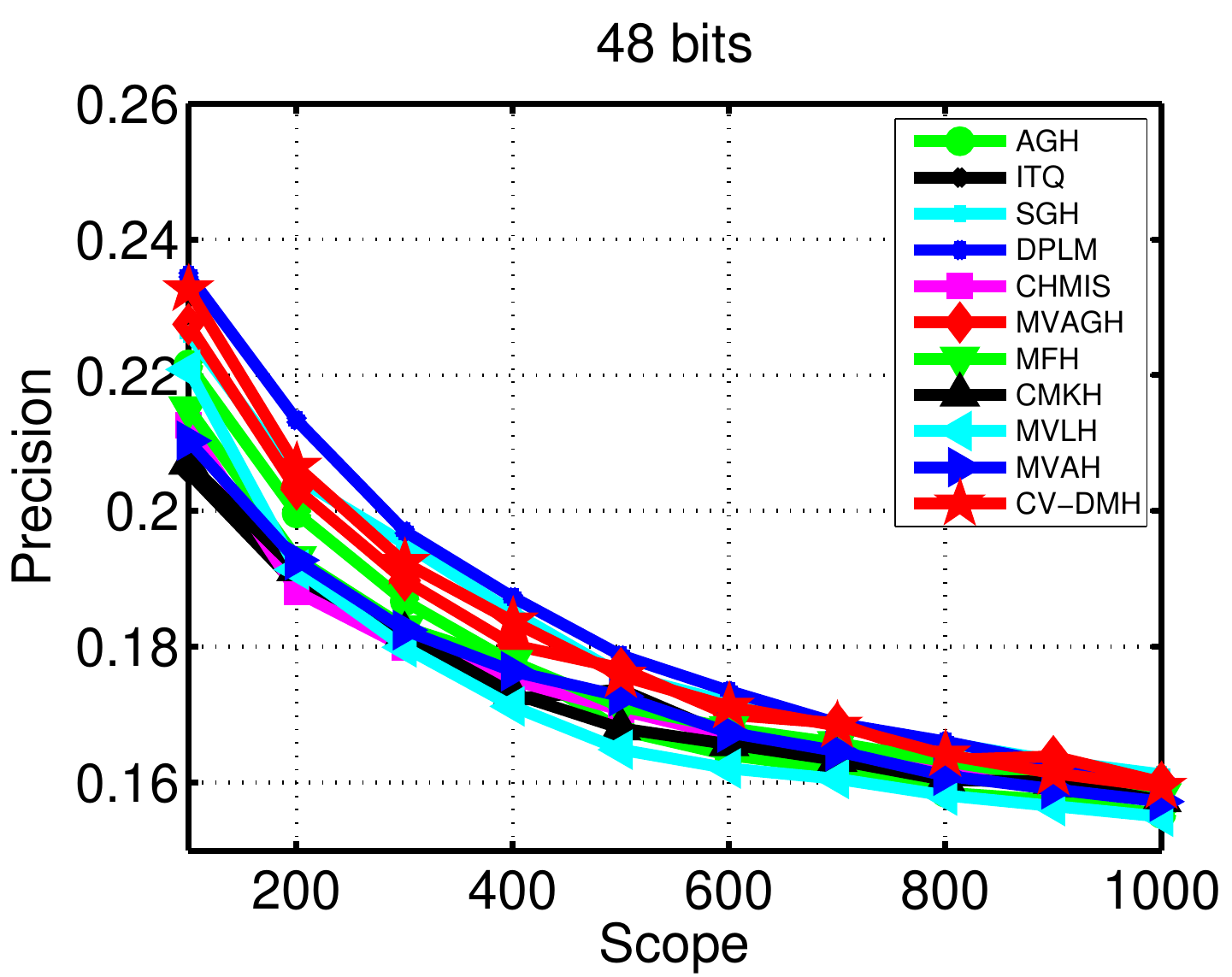}}
}\mbox{
\subfigure{\includegraphics[width=42mm]{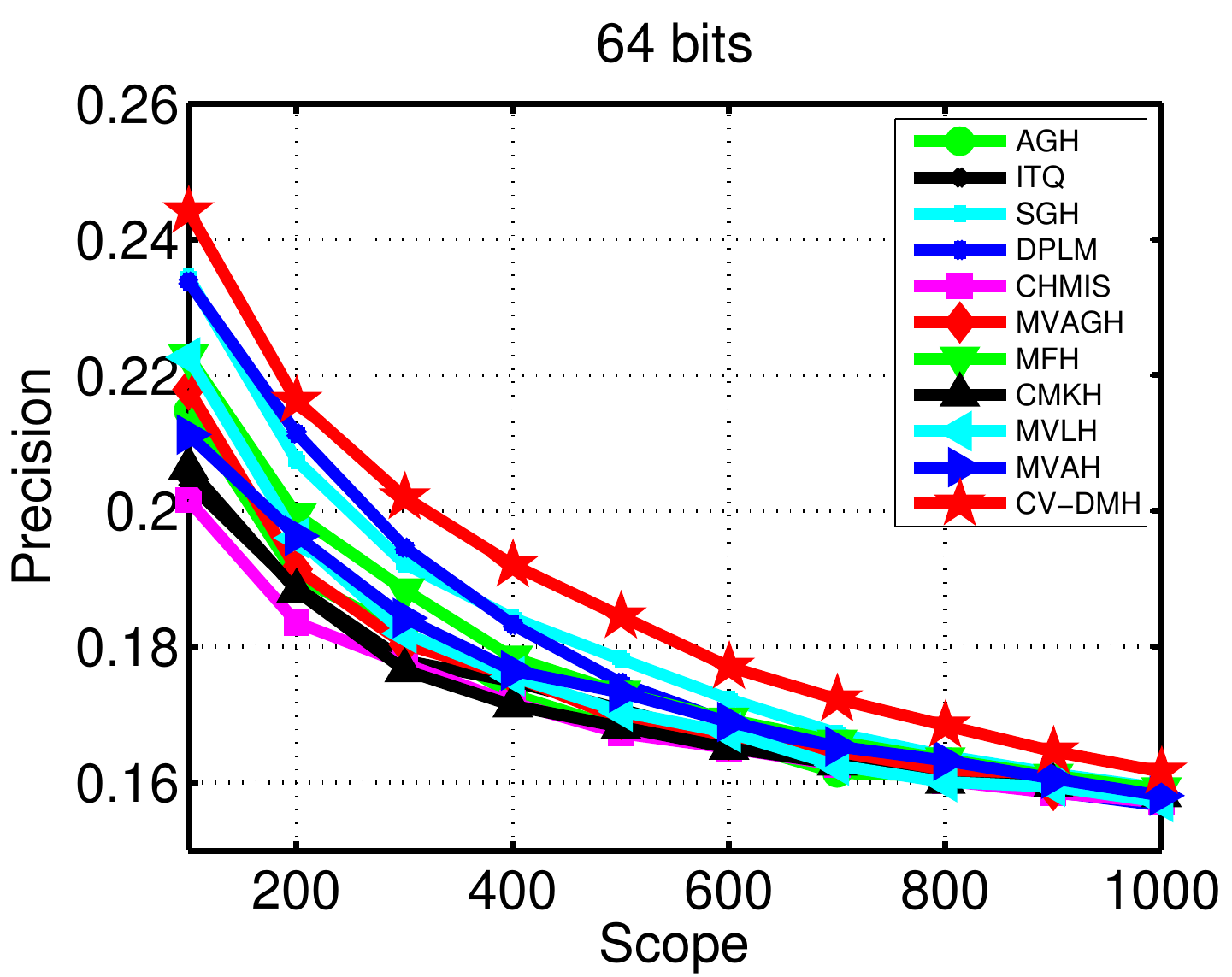}}
}\mbox{
\subfigure{\includegraphics[width=42mm]{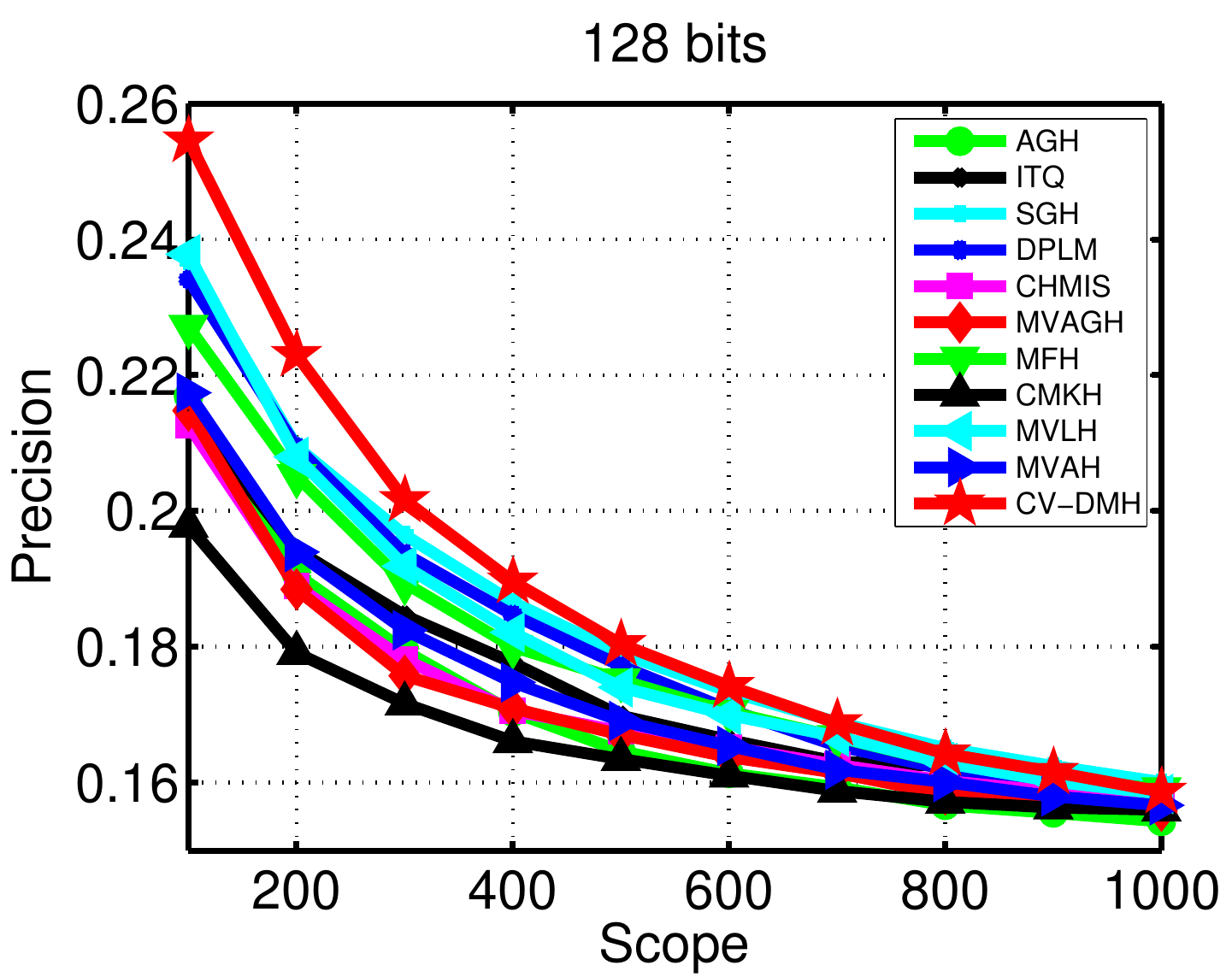}}
}
\vspace{-3mm}
\caption{\emph{Precision-Scope} curves on \emph{Oxford5K} varying code length.}
\label{fig_oxford5k}
\vspace{-3mm}
\end{figure*}
\begin{figure*}
\centering
\mbox{
\subfigure{\includegraphics[width=42mm]{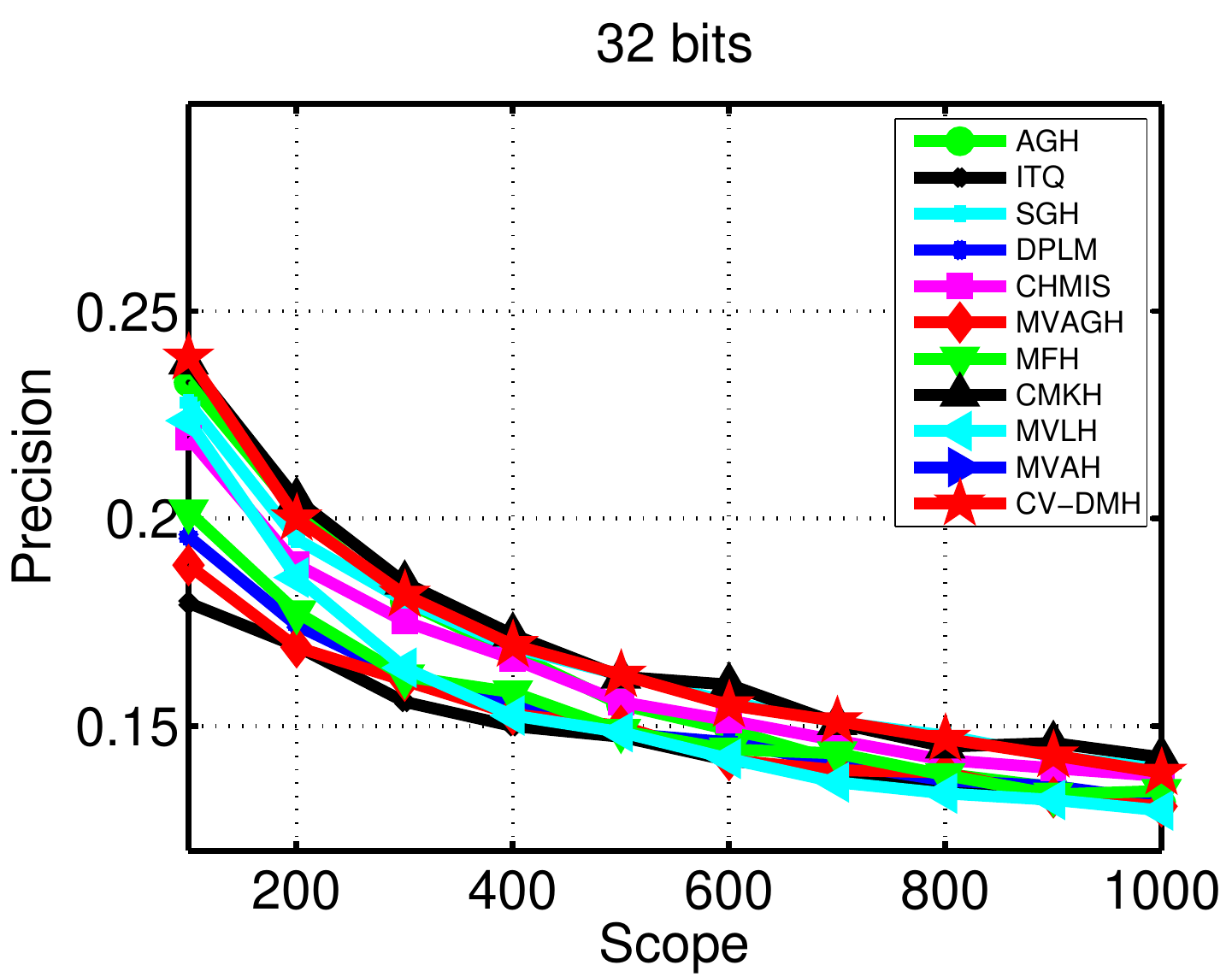}}
}\mbox{
\subfigure{\includegraphics[width=42mm]{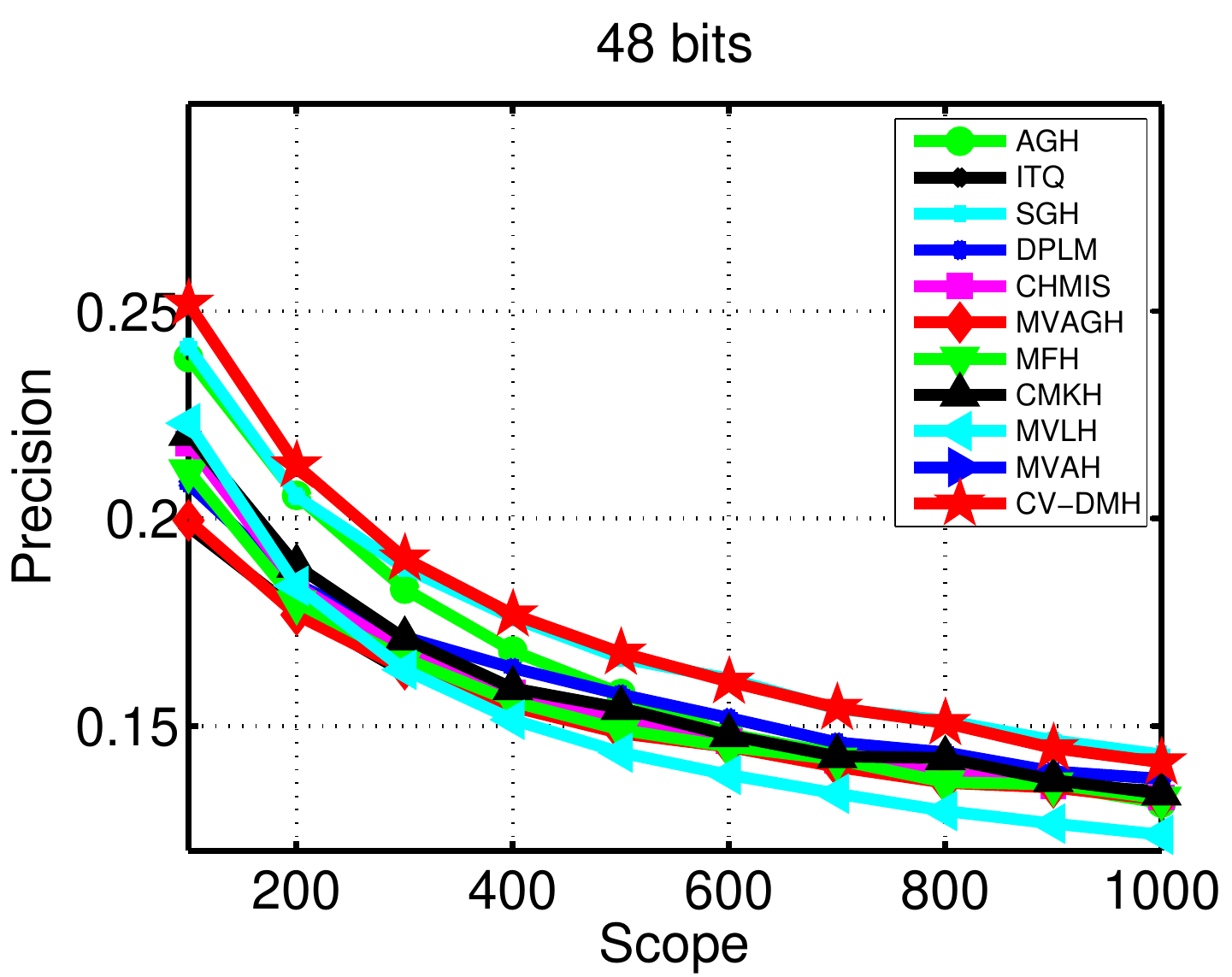}}
}\mbox{
\subfigure{\includegraphics[width=42mm]{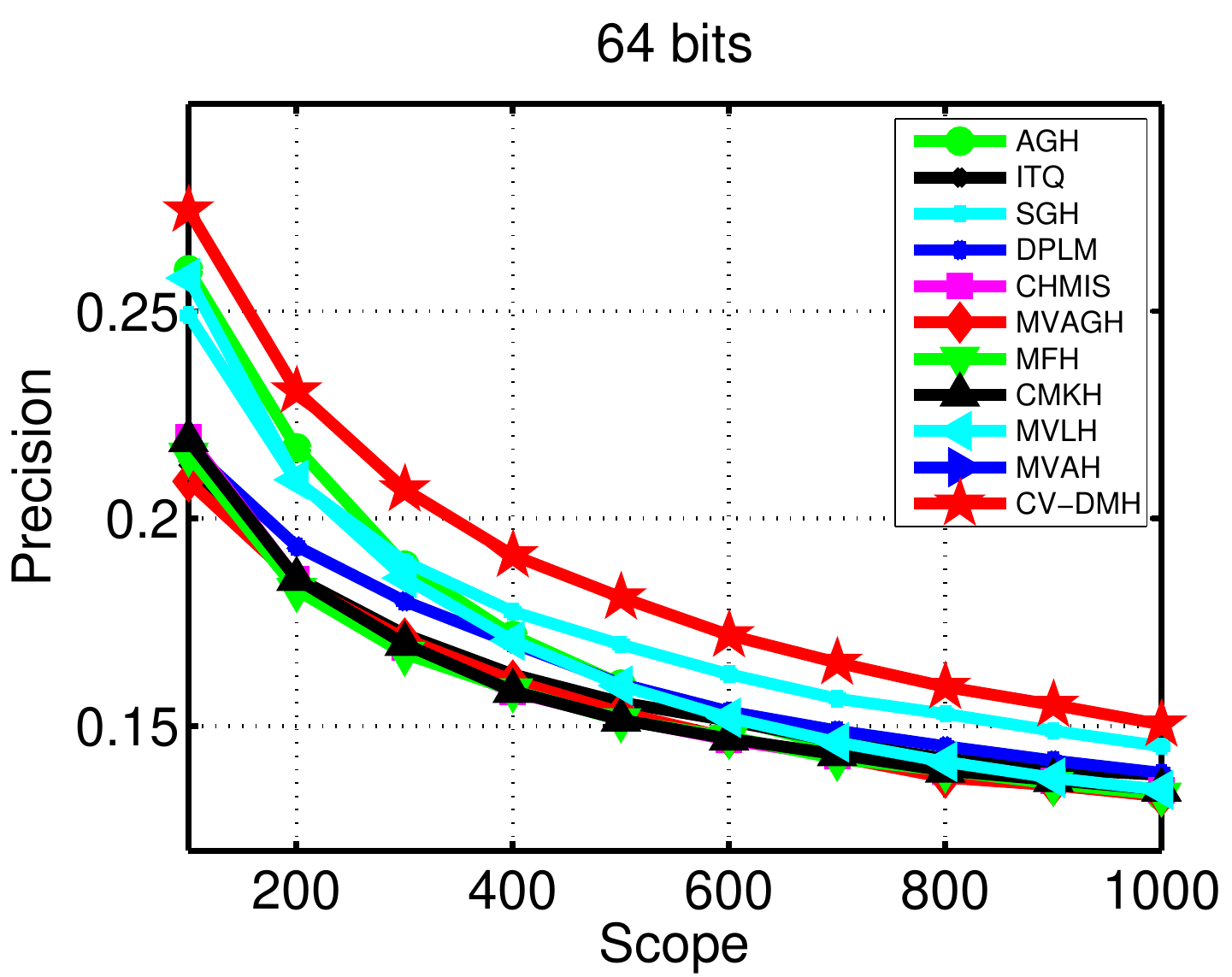}}
}\mbox{
\subfigure{\includegraphics[width=42mm]{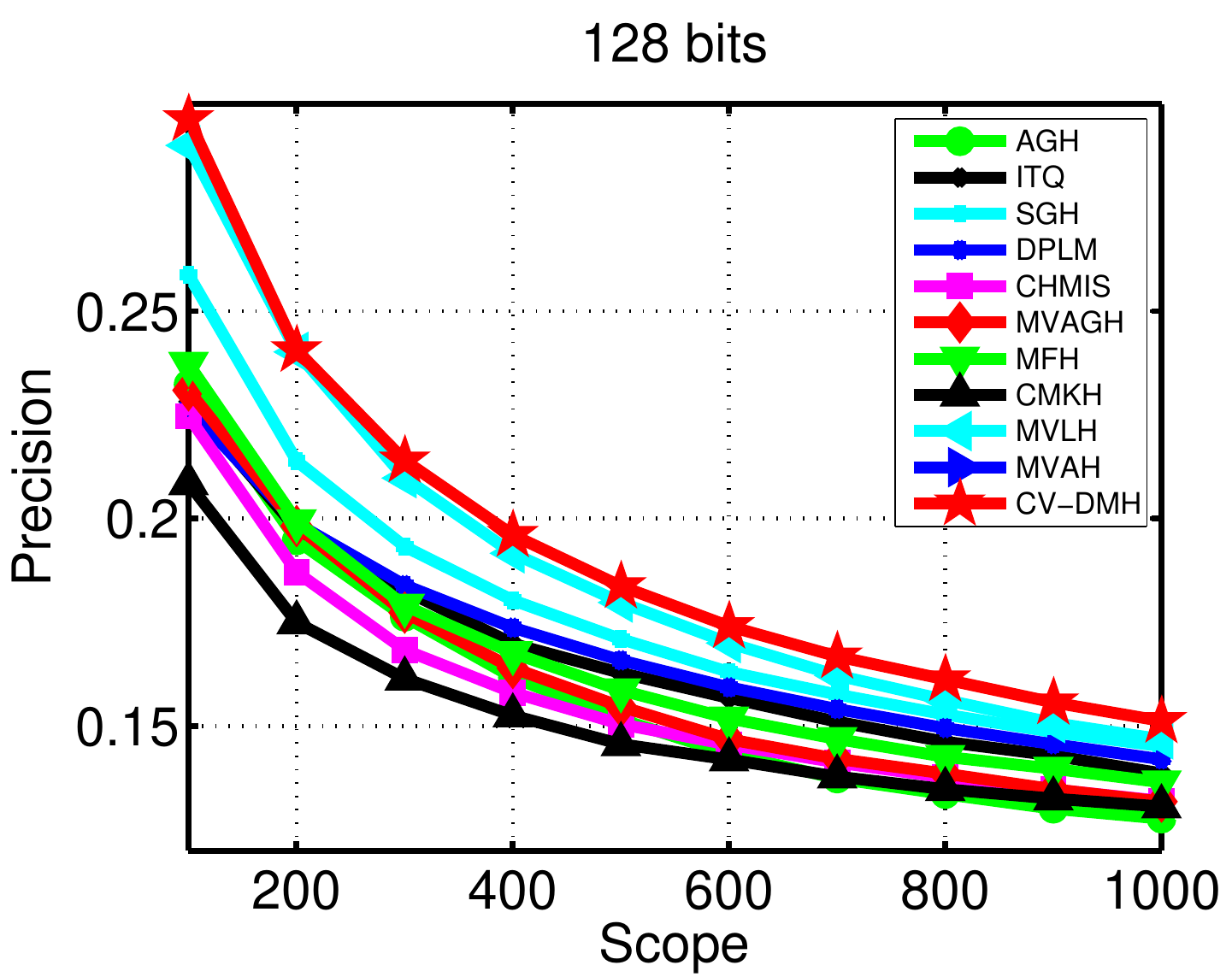}}
}
\vspace{-3mm}
\caption{\emph{Precision-Scope} curves on \emph{Paris6K} varying code length.}
\label{fig_paris6k}
\vspace{-3mm}
\end{figure*}
\begin{figure*}
\centering
\mbox{
\subfigure{\includegraphics[width=42mm]{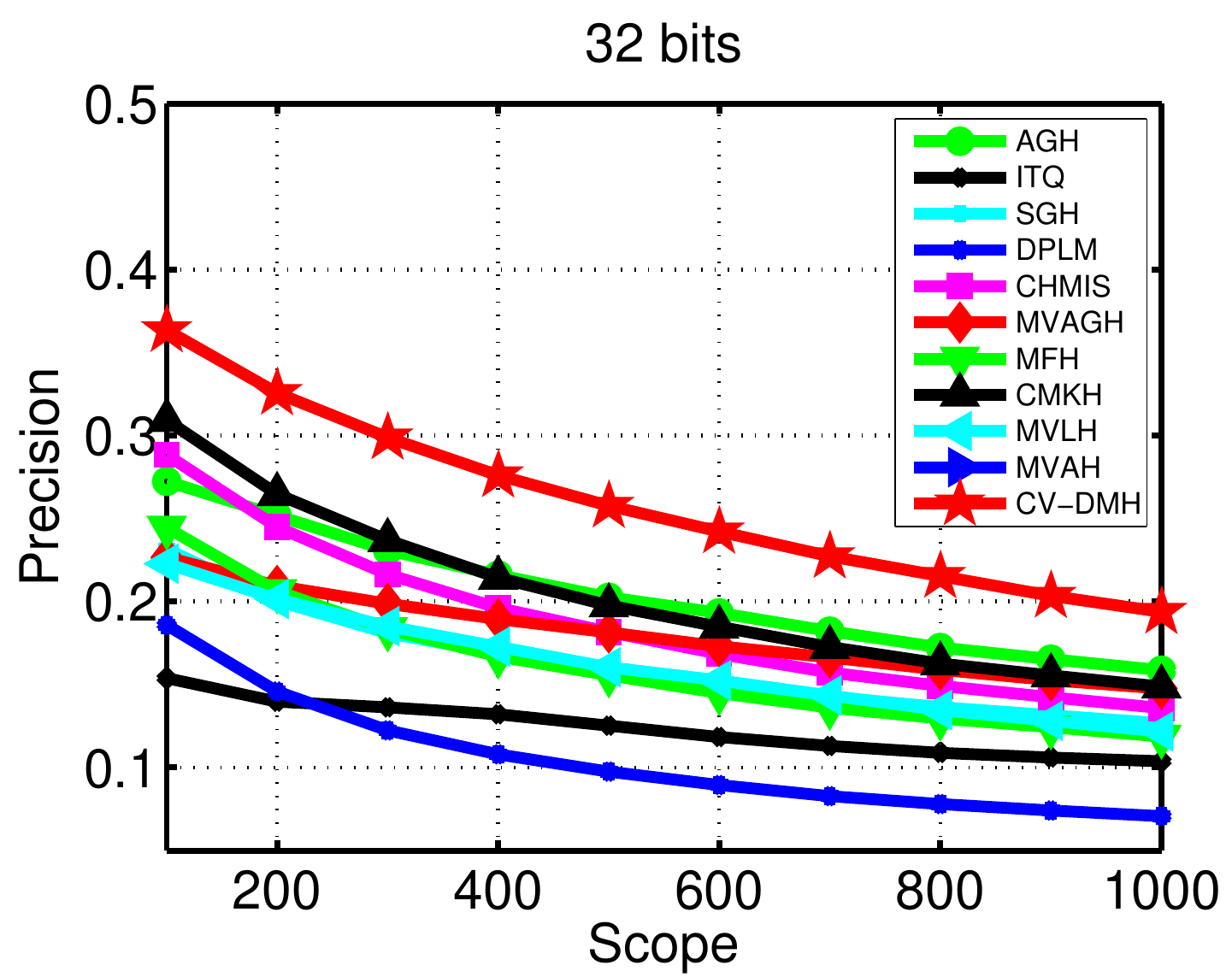}}
}\mbox{
\subfigure{\includegraphics[width=42mm]{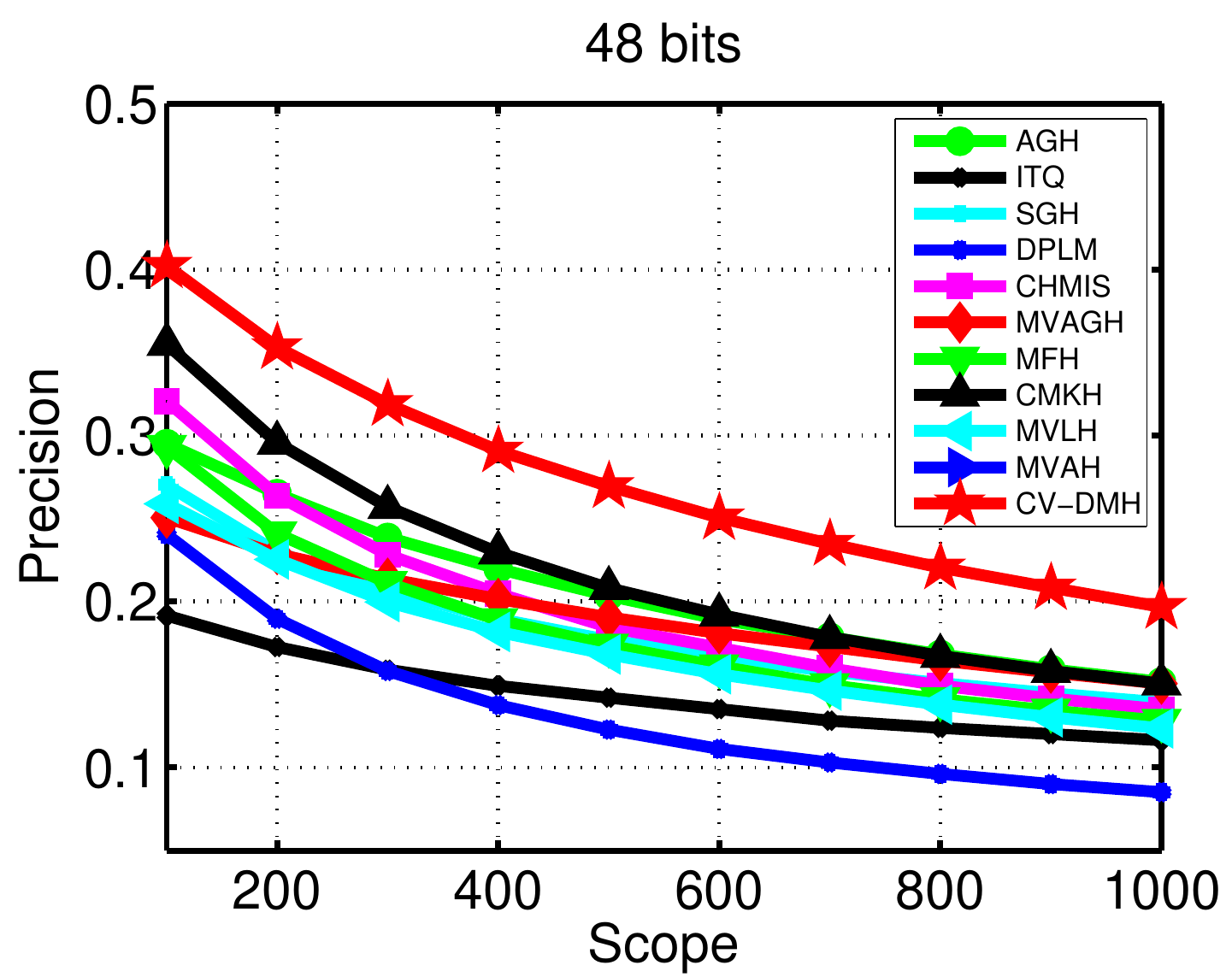}}
}\mbox{
\subfigure{\includegraphics[width=42mm]{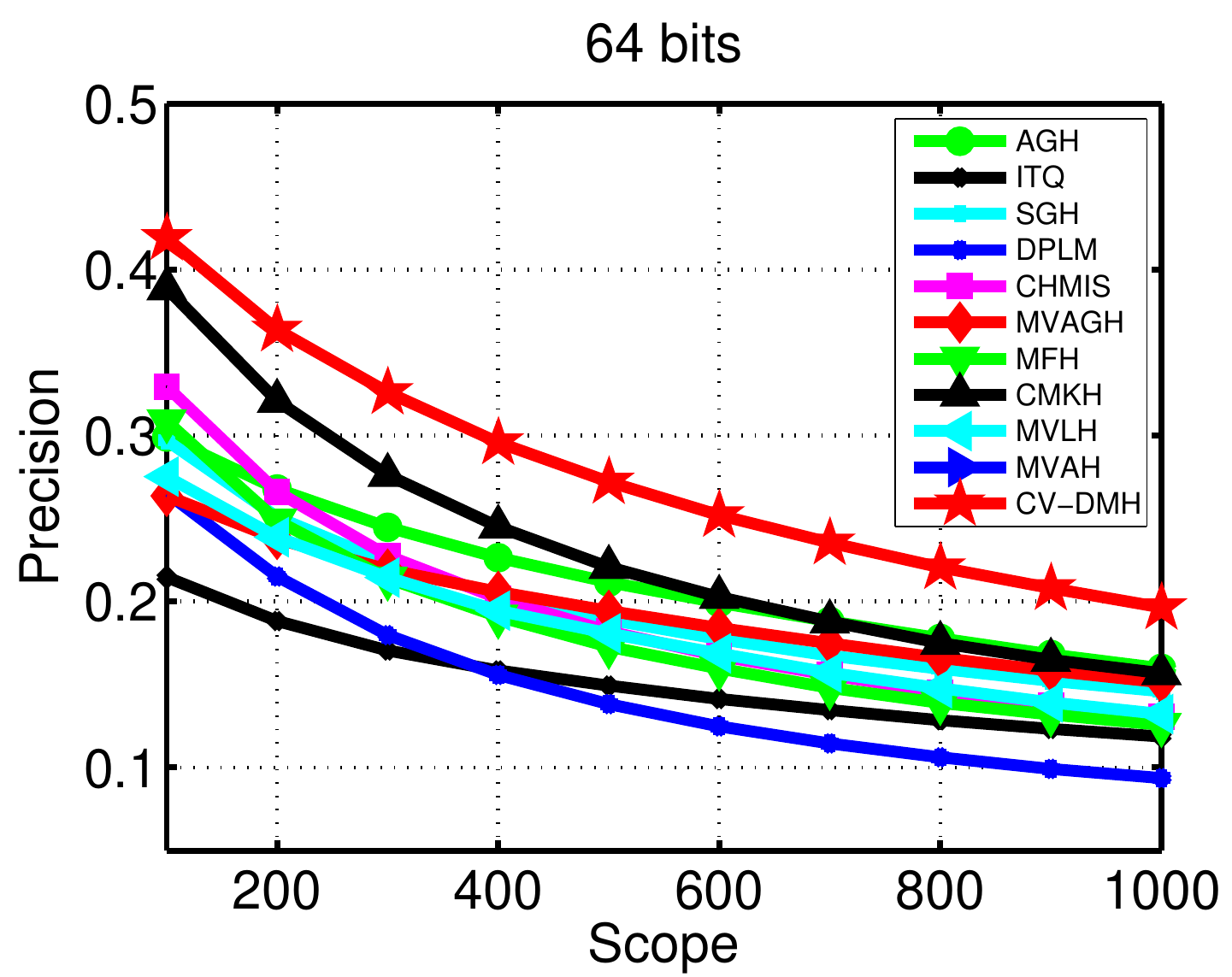}}
}\mbox{
\subfigure{\includegraphics[width=42mm]{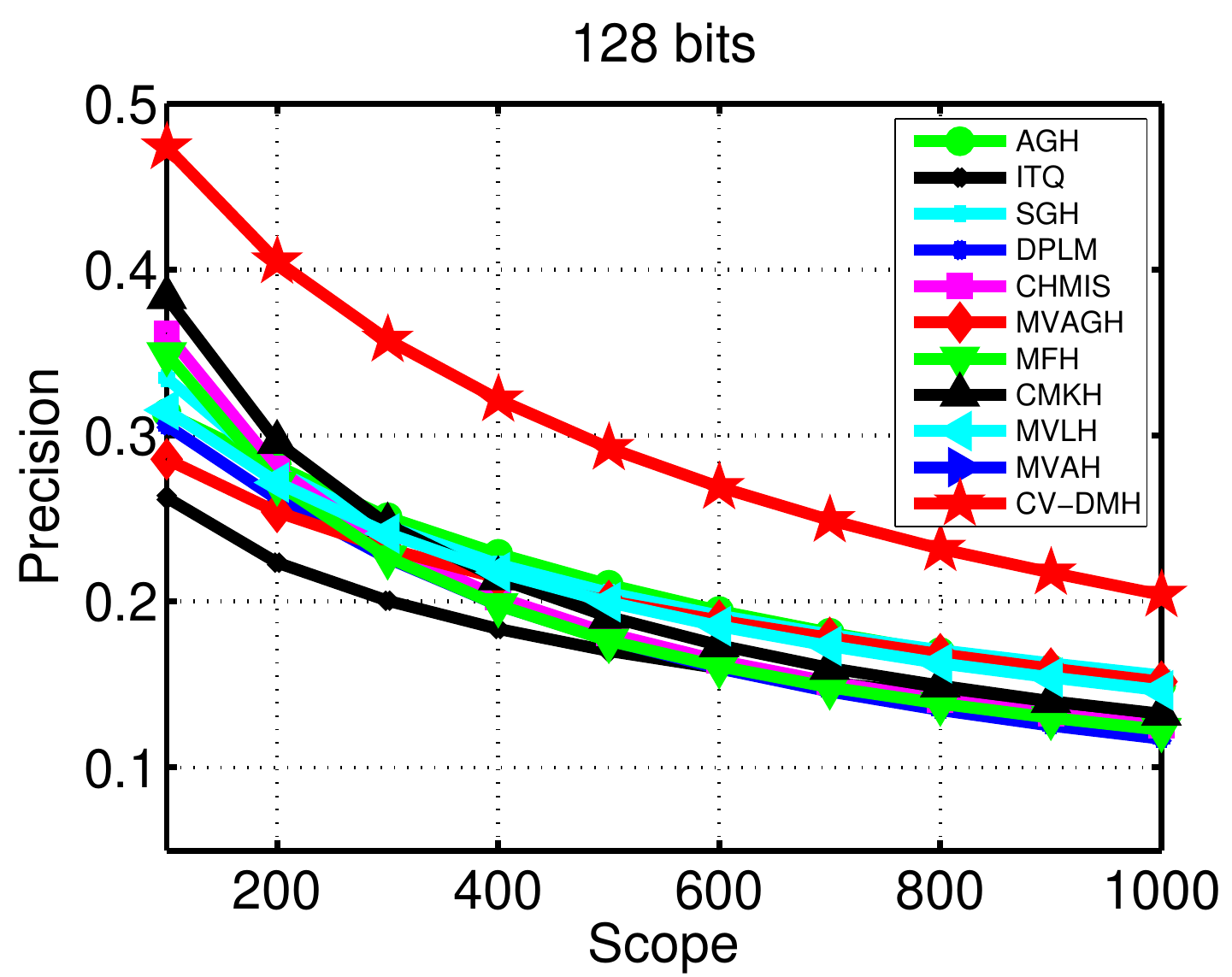}}
}
\vspace{-3mm}
\caption{\emph{Precision-Scope} curves on \emph{Paris500K} varying code length.}
\label{fig_mmaris}
\vspace{-3mm}
\end{figure*}
\subsection{Compared Approaches}
We compare CV-DMH with several state-of-the-art multi-modal hashing approaches.
They include\footnote{For CHMIS, MFH, and CMKH, implementation codes of
them are provided by the authors. For MVAGH, MVLH, and MVAH, we implement them according to the relevant
literature.}:
\begin{itemize}
\itemsep=3pt
  \item Composite hashing with multiple information sources (CHMIS) \cite{HASHMFFOUR}. It is the first work to extend uni-modal spectral hashing to multiple modalities. CHMIS preserves visual similarity with graph and simultaneously integrates information from multiple modalities into the hashing codes with adjusted weights.
  \item Multi-view anchor graph hashing (MVAGH) \cite{MVAGH}. It generates the nonlinearly integrated binary codes as a subset of eigenvectors calculated from an averaged similarity matrix.
  \item Multiple feature hashing (MFH) \cite{HASHMFTWO}. MFH preserves the local structural information of each individual feature modality and also globally considers the local structures for all the features to learn hashing codes.
  \item Compact multiple kernel hashing (CMKH) \cite{HASHMFONE}. CMKH formulates multi-modal hashing as a similarity preserving problem with linearly combined multiple kernels.
  \item Multi-view latent hashing (MVLH) \cite{MVLH}. MVLH first transforms multi-modal features into a unified kernel feature space, then applies matrix factorization to learn the latent factors as the target binary codes.
  \item Multi-view alignment hashing (MVAH) \cite{TIPMVAH}. MVAH first learns multi-modal fused projected vectors with a regularized kernel nonnegative matrix factorization. Then, hashing functions are learned via multivariable logistic regression.
\end{itemize}

Besides, we also compare CV-DMH with several state-of-the-art uni-modal hashing
approaches\footnote{Implementation codes of all these methods are downloaded directly from author websites}: AGH \cite{HASHGRAPH}, ITQ \cite{ITQ}, SGH \cite{SGH} and DPLM \cite{TIP2016binary}. For them, multi-modal features are concatenated into a unified vector
for subsequent learning. The involved parameters of the compared approaches are strictly
adjusted to report the maximum performance with the guidance of relevant literature.

\subsection{Implementation Details}
CV-DMH has three parameters: $\alpha, \beta$, and $\gamma$ in hashing objective function Eq.(\ref{disobj}). They are used to play the balance between the formulated regularization terms. In experiment, we choose the best parameters from $\{10^{-4}, 10^{-2}, 1, 10^2, 10^4\}$. The best performance of CV-DMH is achieved when $\{\alpha=10^{-2}, \beta=10^{-4}, \gamma=10^2\}$, $\{\alpha=10^{-2}, \beta=10^{-2}, \gamma=10^4\}$, $\{\alpha=10^{-2}, \beta=10^{-2}, \gamma=10^2\}$ on \emph{Oxford5K}, \emph{Paris6K}, and \emph{Paris500K} respectively. The parameters $\mu$ and $\eta$ in Eq.(\ref{eq:tj}) are set for ALM based discrete optimization. The best performance is obtained when $\{\mu=1, \eta=1\}$, $\{\mu=0.01, \eta=1\}$, $\{\mu=0.01, \eta=1\}$ on \emph{Oxford5K}, \emph{Paris6K}, and \emph{Paris500K} respectively. In addition, the best canonical view size $T$ is set to 100 on \emph{Oxford5K}
and \emph{Paris6K}, and 300 on \emph{Paris500K}.
The best number of nearest canonical views $r$ in Eq.(\ref{eq:afm}) is set to 70 on \emph{Oxford5K}
and \emph{Paris6K}, and 200 on \emph{Paris500K}. $\sigma$ in Eq.(\ref{eq:afm}) is set to $10^{-4}$ to maximize the performance.

In experiments, hashing code length on all datasets is
varied in the range of [32, 48, 64, 128] to observe the performance. Further,
The retrieval scope on three datasets is set from 100 to 1000 with step size 100. In the first step of Algorithm \ref{calv}, the initial values of $E_{\eta}, E_{\mu}$ are set to 0. The values of $V$ is calculated by PCAH \cite{PCAH}.
\begin{table*}
\small
\caption{Canonical views improve the robustness of CV-DMH. CV-DMH-I denotes direct multi-modal hashing without canonical views.}
\label{effcan}
\centering
\begin{tabular}{|p{16mm}<{\centering}|p{8mm}<{\centering}|p{8mm}<{\centering}|p{8mm}<{\centering}|p{10mm}<{\centering}
|p{8mm}<{\centering}|p{8mm}<{\centering}|p{8mm}<{\centering}|p{10mm}<{\centering}|p{8mm}<{\centering}|p{8mm}<{\centering}
|p{8mm}<{\centering}|p{10mm}<{\centering}|}
\hline
\multirow{2}{*}{Methods} & \multicolumn{4}{c|}{\emph{Oxford5K}} & \multicolumn{4}{c|}{\emph{Paris6K}} & \multicolumn{4}{c|}{\emph{Paris500K}}\\
\cline{2-13}
& 32 & 48 & 64  & 128  & 32 & 48 & 64  & 128 & 32 & 48 & 64  & 128 \\
\hline
CV-DMH-I & 0.2855 & 0.3005 & 0.3053 & 0.3319 & 0.2905 & 0.3177 & 0.3225 & 0.3488 & 0.3558 & 0.4229 & 0.4509 & 0.5222\\
\hline
CV-DMH & \textbf{0.3028} & \textbf{0.3153} & \textbf{0.3275} & \textbf{0.3458} & \textbf{0.3350} & \textbf{0.3534} & \textbf{0.3799} & \textbf{0.4190} & \textbf{0.4533} & \textbf{0.5029} & \textbf{0.5293}  & \textbf{0.6025}\\
\hline
\end{tabular}
\end{table*}
\begin{table*}
\small
\caption{Effects of canonical view mining in multiple modalities.}
\label{feat}
\centering
\begin{tabular}{|p{14mm}<{\centering}|p{8mm}<{\centering}|p{8mm}<{\centering}|p{8mm}<{\centering}|p{10mm}<{\centering}
|p{8mm}<{\centering}|p{8mm}<{\centering}|p{8mm}<{\centering}|p{10mm}<{\centering}|p{8mm}<{\centering}|p{8mm}<{\centering}
|p{8mm}<{\centering}|p{10mm}<{\centering}|}
\hline
\multirow{2}{*}{Methods} & \multicolumn{4}{c|}{\emph{Oxford5K}} & \multicolumn{4}{c|}{\emph{Paris6K}} & \multicolumn{4}{c|}{\emph{Paris500K}}\\
\cline{2-13}
& 32 & 48 & 64  & 128  & 32 & 48 & 64  & 128 & 32 & 48 & 64  & 128 \\
\hline
CM & 0.2131 & 0.2188 & 0.2192 & 0.2287 & 0.1954 & 0.1971 & 0.2043 & 0.2230 & 0.1511 & 0.1782 & 0.1945 & 0.2451\\
\hline
LBP & 0.2619 & 0.2695 & 0.2872 & 0.2950 & 0.2249 & 0.2284 & 0.2504 & 0.2582 & 0.2585 & 0.3107 & 0.3422 & 0.3899 \\
\hline
EDH & 0.2465 & 0.2658 & 0.2663 & 0.2645 & 0.2224 & 0.2325 & 0.2380 & 0.2620 & 0.2291 & 0.2821 & 0.3147 & 0.3965 \\
\hline
BOVW & 0.2932 & 0.3155 & 0.3022 & 0.3300 & 0.3060 & 0.3252 & 0.3380 & 0.3776 & 0.3255 & 0.3959 & 0.4381 & 0.5112 \\
\hline
GIST & 0.2589 & 0.2634 & 0.2675 & 0.2781 & 0.2517 & 0.2669 & 0.2797 & 0.3042 & 0.2528 & 0.3200 & 0.3745 & 0.4695 \\
\hline
CV-DMH & \textbf{0.3028} & \textbf{0.3153} & \textbf{0.3275} & \textbf{0.3458} & \textbf{0.3350} & \textbf{0.3534} & \textbf{0.3799} & \textbf{0.4190} & \textbf{0.4533} & \textbf{0.5029} & \textbf{0.5293}  & \textbf{0.6025}\\
\hline
\end{tabular}
\end{table*}
\begin{table*}
\small
\caption{Effects of submodular function based canonical view discovery.}
\label{effesumbo}
\centering
\begin{tabular}{|p{18mm}<{\centering}|p{8mm}<{\centering}|p{8mm}<{\centering}|p{8mm}<{\centering}|p{10mm}<{\centering}
|p{8mm}<{\centering}|p{8mm}<{\centering}|p{8mm}<{\centering}|p{10mm}<{\centering}|p{8mm}<{\centering}|p{8mm}<{\centering}
|p{8mm}<{\centering}|p{10mm}<{\centering}|}
\hline
\multirow{2}{*}{Methods} & \multicolumn{4}{c|}{\emph{Oxford5K}} & \multicolumn{4}{c|}{\emph{Paris6K}} & \multicolumn{4}{c|}{\emph{Paris500K}}\\
\cline{2-13}
& 32 & 48 & 64  & 128  & 32 & 48 & 64  & 128 & 32 & 48 & 64  & 128 \\
\hline
\emph{Random} & 0.2740 & 0.2795 & 0.2727 & 0.3097 & 0.3254 & 0.3394 & 0.3529 & 0.3859 & 0.4028 & 0.4415 & 0.4659 & 0.5206 \\
\hline
\emph{K-means} & 0.2648 & 0.2877 & 0.2829 & 0.3199 & 0.3102 & 0.3267 & 0.3466 & 0.3879 & 0.3944 & 0.4418 & 0.4739 & 0.5363 \\
\hline
DL & 0.2523 & 0.2683 & 0.2699 & 0.2848 & 0.2615 & 0.2807 & 0.2822 & 0.3191 & 0.3306 & 0.3568 & 0.3770 & 0.4440 \\
\hline
R-CV-DMH & 0.2890 & 0.3036 & 0.3151 & 0.3434 & 0.3350 & \textbf{0.3627} & \textbf{0.3832 }& 0.4181 & 0.4443 & 0.4916 & 0.5198 & 0.5923\\
\hline
CV-DMH & \textbf{0.3028} & \textbf{0.3153} & \textbf{0.3275} & \textbf{0.3458} & \textbf{0.3350} & 0.3534 & 0.3799 & \textbf{0.4190} & \textbf{0.4533} & \textbf{0.5029} & \textbf{0.5293}  & \textbf{0.6025}\\
\hline
\end{tabular}
\end{table*}
\section{Experimental Results and Discussions}
\label{sec:5}
\subsection{Performance Comparison Results}
We report mAP results and \emph{Precision-Scope}
curves of all approaches in Table \ref{resulttable} and Figure \ref{fig_oxford5k}, \ref{fig_paris6k}, \ref{fig_mmaris},
respectively. From the presented results, we can easily find that CV-DMH outperforms the
competitors on almost all cases. It is interesting to find that, even with less binary bits,
CV-DMH can still achieve higher mAP than many competitors with longer binary codes.
Further, Figure \ref{fig_mmaris} shows that, on \emph{Paris500K} and 128 bits, the precision gain of CV-DMH over the second best
approach is more than 10\%, and it becomes larger when more images are returned. Moreover,
we observe that performance improvement on \emph{Paris500K} is more than that obtained
on \emph{Oxford5K} and \emph{Paris6K}. Indeed, images in \emph{Paris500K} have more landmark images and
diverse visual appearances. This experimental phenomenon validates the desirable property of CV-DMH on
accommodating the visual variations. Finally, we observe that the retrieval performance of CV-DMH on \emph{Oxford5K}
is steadily improved when binary code length increases. However, we don't gain similar
observations for many approaches studied in this experimental study.
This is because CV-DMH adopts discrete optimization to solve hashing codes directly without any relaxing. The design can
successfully avoid the accumulated quantization errors brought in compared approaches. In this case, more binary bits will bring more discriminative information and thus enable CV-DMH to gain higher discriminative capability.

\begin{table*}
\small
\caption{Canonical views improve the robustness of CV-DMH. CV-DMH-II denotes direct multi-modal hashing with anchor transformation.}
\label{effinter}
\centering
\begin{tabular}{|p{16mm}<{\centering}|p{8mm}<{\centering}|p{8mm}<{\centering}|p{8mm}<{\centering}|p{10mm}<{\centering}
|p{8mm}<{\centering}|p{8mm}<{\centering}|p{8mm}<{\centering}|p{10mm}<{\centering}|p{8mm}<{\centering}|p{8mm}<{\centering}
|p{8mm}<{\centering}|p{10mm}<{\centering}|}
\hline
\multirow{2}{*}{Methods} & \multicolumn{4}{c|}{\emph{Oxford5K}} & \multicolumn{4}{c|}{\emph{Paris6K}} & \multicolumn{4}{c|}{\emph{Paris500K}}\\
\cline{2-13}
& 32 & 48 & 64  & 128  & 32 & 48 & 64  & 128 & 32 & 48 & 64  & 128 \\
\hline
CV-DMH-II & 0.2829 & 0.2816 & 0.2903 & 0.2994 & 0.2699 & 0.2768 & 0.2852 & 0.2822 & 0.2877 & 0.3248 & 0.3450 & 0.3750 \\
\hline
CV-DMH & \textbf{0.3028} & \textbf{0.3153} & \textbf{0.3275} & \textbf{0.3458} & \textbf{0.3350} & \textbf{0.3534} & \textbf{0.3799} & \textbf{0.4190} & \textbf{0.4533} & \textbf{0.5029} & \textbf{0.5293}  & \textbf{0.6025}\\
\hline
\end{tabular}
\end{table*}
\begin{table*}
\small
\caption{Effects of discrete optimization. CV-DMH-II: it only considers  bit-uncorrelated constraint by relaxing the discrete constraint and removing balance constraint in the Eq.(\ref{disobj}). CV-DMH-III: it considers bit-uncorrelated constraint and balance constraint together with discrete constraint relaxing. }
\label{effdsop}
\centering
\begin{tabular}{|p{18mm}<{\centering}|p{8mm}<{\centering}|p{8mm}<{\centering}|p{8mm}<{\centering}|p{10mm}<{\centering}
|p{8mm}<{\centering}|p{8mm}<{\centering}|p{8mm}<{\centering}|p{10mm}<{\centering}|p{8mm}<{\centering}|p{8mm}<{\centering}
|p{8mm}<{\centering}|p{10mm}<{\centering}|}
\hline
\multirow{2}{*}{Methods} & \multicolumn{4}{c|}{\emph{Oxford5K}} & \multicolumn{4}{c|}{\emph{Paris6K}} & \multicolumn{4}{c|}{\emph{Paris500K}}\\
\cline{2-13}
& 32 & 48 & 64  & 128  & 32 & 48 & 64  & 128 & 32 & 48 & 64  & 128 \\
\hline
CV-DMH-III & 0.2905 & 0.3179 & 0.3160 & 0.3403 & 0.3110 & 0.3194 & 0.3397 & 0.3887 & 0.4423 & 0.4918 & 0.5189 & 0.5917 \\
\hline
CV-DMH-IV & 0.2932 & 0.3177 & 0.3219 & 0.3373 & 0.3175 & 0.3510 & 0.3510 & 0.4003 & 0.4421 & 0.4935 & 0.5205 & 0.5915 \\
\hline
CV-DMH & \textbf{0.3028} & \textbf{0.3153} & \textbf{0.3275} & \textbf{0.3458} & \textbf{0.3350} & \textbf{0.3534} & \textbf{0.3799} & \textbf{0.4190} & \textbf{0.4533} & \textbf{0.5029} & \textbf{0.5293}  & \textbf{0.6025}\\
\hline
\end{tabular}
\end{table*}
\subsection{Canonical View or Not?}
To see how the canonical view mining can benefit multi-modal hashing learning, we first compare the performance of CV-DMH with the one (denoted as CV-DMH-II) which performs hashing (Eq.(\ref{disobj})) directly on raw concatenated multiple low-level features. Table \ref{effcan} presents the detailed comparative results. From it, we easily find that CV-DMH can consistently yield better performance. On three datasets, the maximum search precision improvements reach about 2\%, 7\%, and 9\%, respectively. The performance improvement is attributed to the fact that, canonical views capture key visual contents of landmarks, diverse visual contents can be robustly accommodated with intermediate representation, and thus hashing codes learned on intermediate representation enjoy better robustness.

Then, we investigate the effects of canonical view mining in multiple modalities. We compare the performance of CV-DMH with the approaches that perform hashing (Eq.(\ref{disobj})) on only uni-modal canonical view set. We denote them directly with the corresponding modality names: CM, LBP, EDH, BOVW, and GIST respectively. Table \ref{feat} presents the main results. It demonstrates that CV-DMH can achieve the best performance. The reason is that, with multi-modal learning, canonical view set can cover more visual variations and thus CV-DMH can enjoy better robustness. All the above results clearly demonstrate that CV-DMH adopts a reasonable strategy by employing canonical views for MLS.

Finally, we validate the effects of the proposed submodular function based canonical view selection approach. We compare the performance of CV-DMH with four variants of our method. The first two competitors \emph{random} and \emph{K-means} discover canonical views by randomly sampling and K-means respectively. The third competitor DL learns canonical views by dictionary learning with K-SVD \cite{Aharon2006}. The last one R-CV-DMH selects canonical views by only considering representativeness of view set. The detailed comparison results are presented in Table \ref{effesumbo}. It can be easily observed that CV-DMH can achieve better performance in most cases. These results demonstrate the effectiveness of submodular function on discovering canonical views of landmarks. The reasons for the weakness of competitors can be explained as follows: the approach \emph{random} discovers canonical views without considering any underlying visual landmark distributions. \emph{K-means} and R-CV-DMH only consider representativeness of the selected view set, they unfortunately ignore the visual redundancy of views. In this case, with the fixed size of canonical view set, real representative views may be excluded by redundant ones. DL seeks the optimal ``dictionary" views that fit the visual distributions of training data. However, the learned dictionaries may not well adapt the query visual contents. In contrast, our approach comprehensively considers representativeness and redundancy of view set. The canonical views capture the key visual characteristics of landmarks and thus can characterize the visual contents of both query and database images.

\subsection{Effects of Intermediate Representation}
Intermediate representation bridges the canonical view mining and discrete binary embedding model. It is generated by calculating the multi-modal sparse reconstruction coefficients between image and canonical views. This subsection evaluates its effectiveness. Specifically, we compare CV-DMH with the variant approach CV-DMH-II that generates intermediate representation by calculating the similarities between images and canonical views. The calculating process is similar to anchor transformation in many existing hashing approaches \cite{SDH,DGH,HASHGRAPH}. In this experiment, we set the same number of nearest canonical views in two compared approaches for calculation. Table \ref{effinter} summarizes the main results. It clearly demonstrates that CV-DMH can consistently outperform the CV-DMH-II on all datasets and code lengths. The potential reason is that: the auto-generated sparse reconstruction coefficient can robustly accommodate the visual variations of landmark images with certain canonical views, they can provide more robust representation bases for subsequent binary embedding.

\subsection{Effects of Discrete Optimization}
To evaluate the effects of discrete optimization, we compare the performance between CV-DMH and its two variants. CV-DMH-III:  it only considers  bit-uncorrelated constraint by relaxing the discrete constraint and removing balance constraint in the Eq.(\ref{disobj}). CV-DMH-IV: it considers bit-uncorrelated constraint and balance constraint together with discrete constraint relaxing. Both two variants adopt conventional ``relaxing+rounding" optimization as many existing hashing approaches. The relaxed hashing values are also solved with ALM and the final binary hashing codes are generated by mean thresholding. Table \ref{effdsop} gives main mAP comparison results. We can clearly observe that CV-DMH can achieve better performance in all cases. These results validate the effects of direct discrete optimization and considering three constraints.

\begin{figure*}
\centering
\mbox{
\subfigure[$\alpha$ is fixed to $10^{-2}$]{\includegraphics[width=40mm]{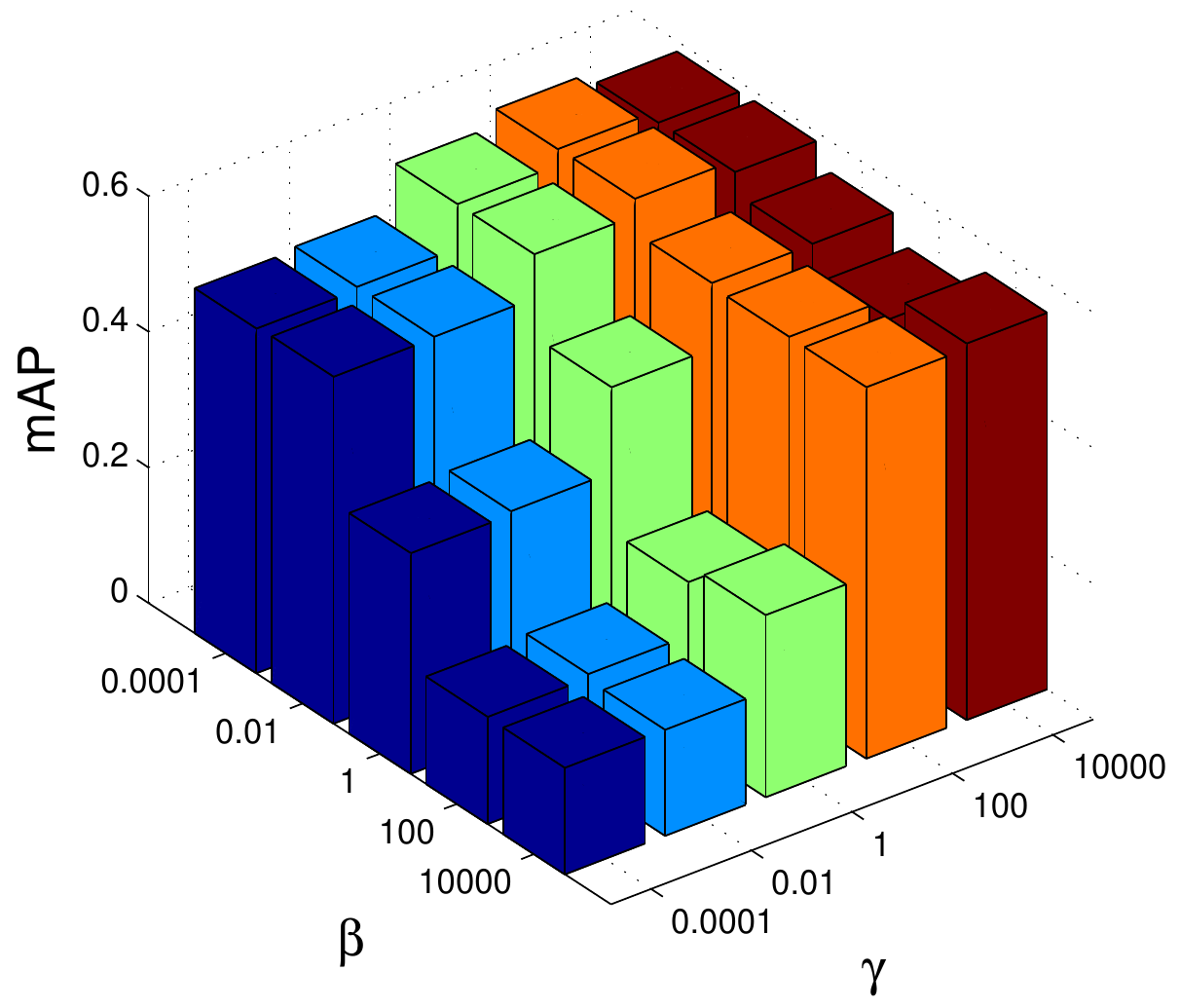}}
}\mbox{
\subfigure[$\beta$ is fixed to $10^{-4}$]{\includegraphics[width=40mm]{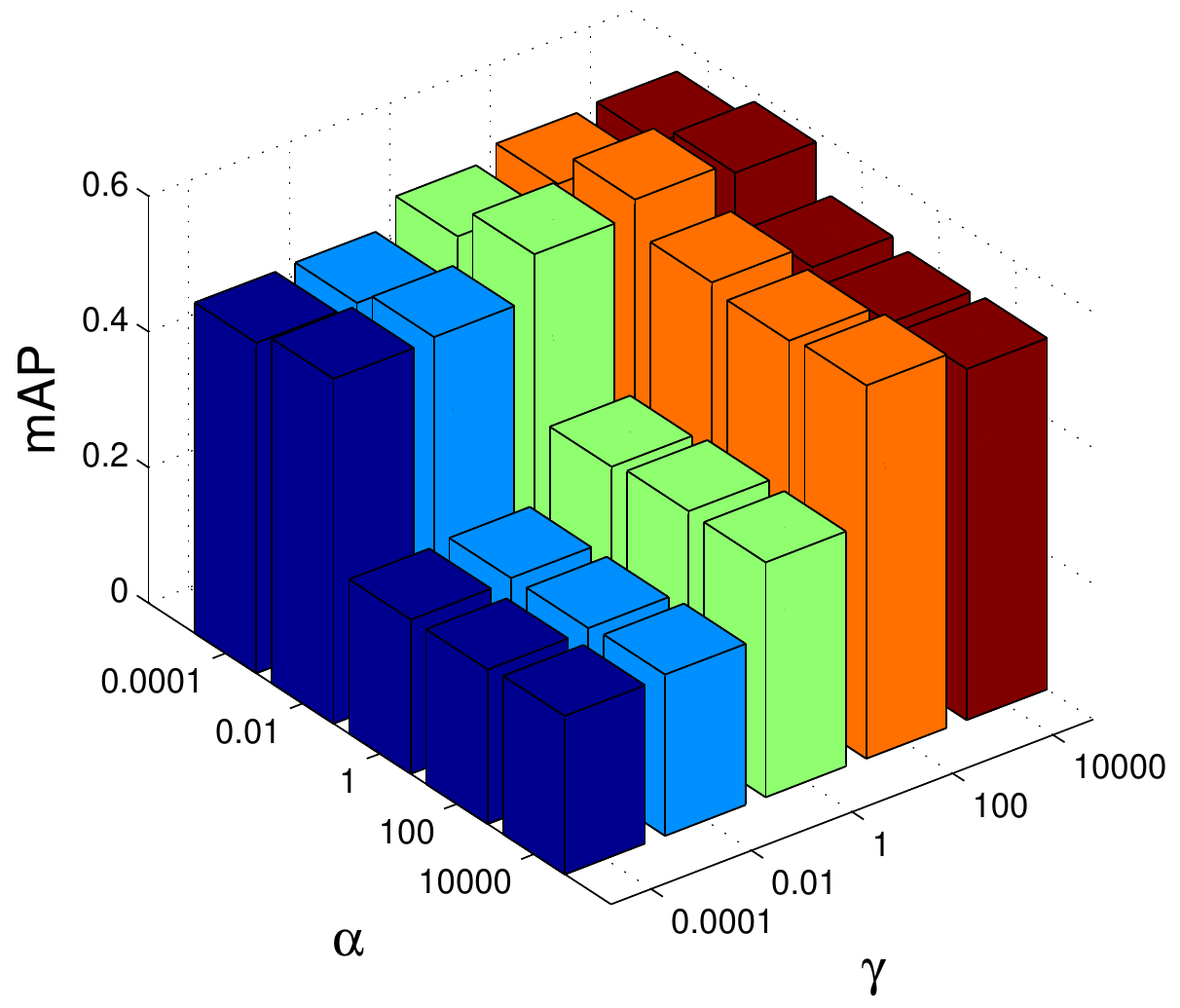}}
}\mbox{
\subfigure[$\gamma$ is fixed to $10^2$]{\includegraphics[width=40mm]{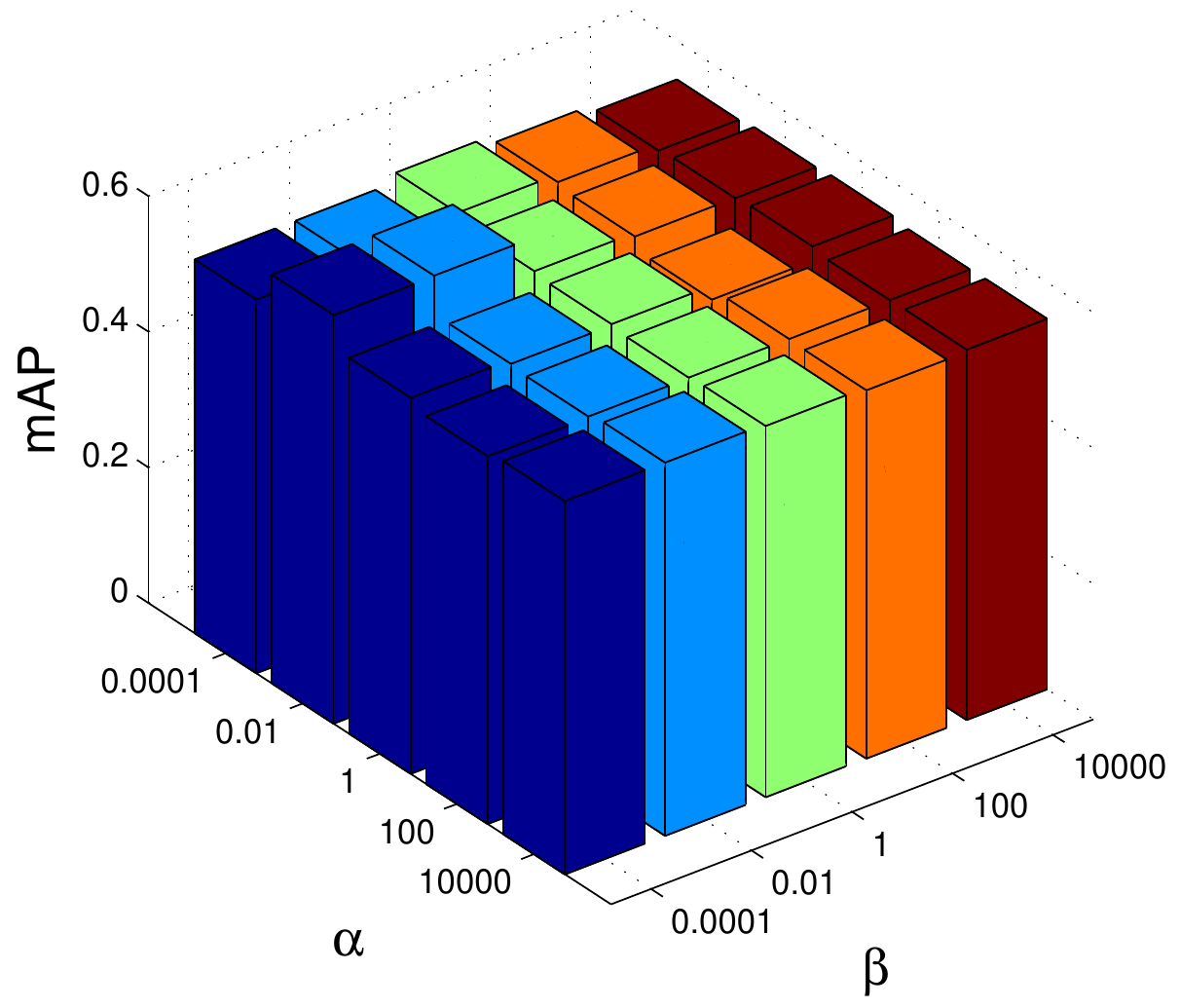}}
}\mbox{
\subfigure[]{\includegraphics[width=40mm]{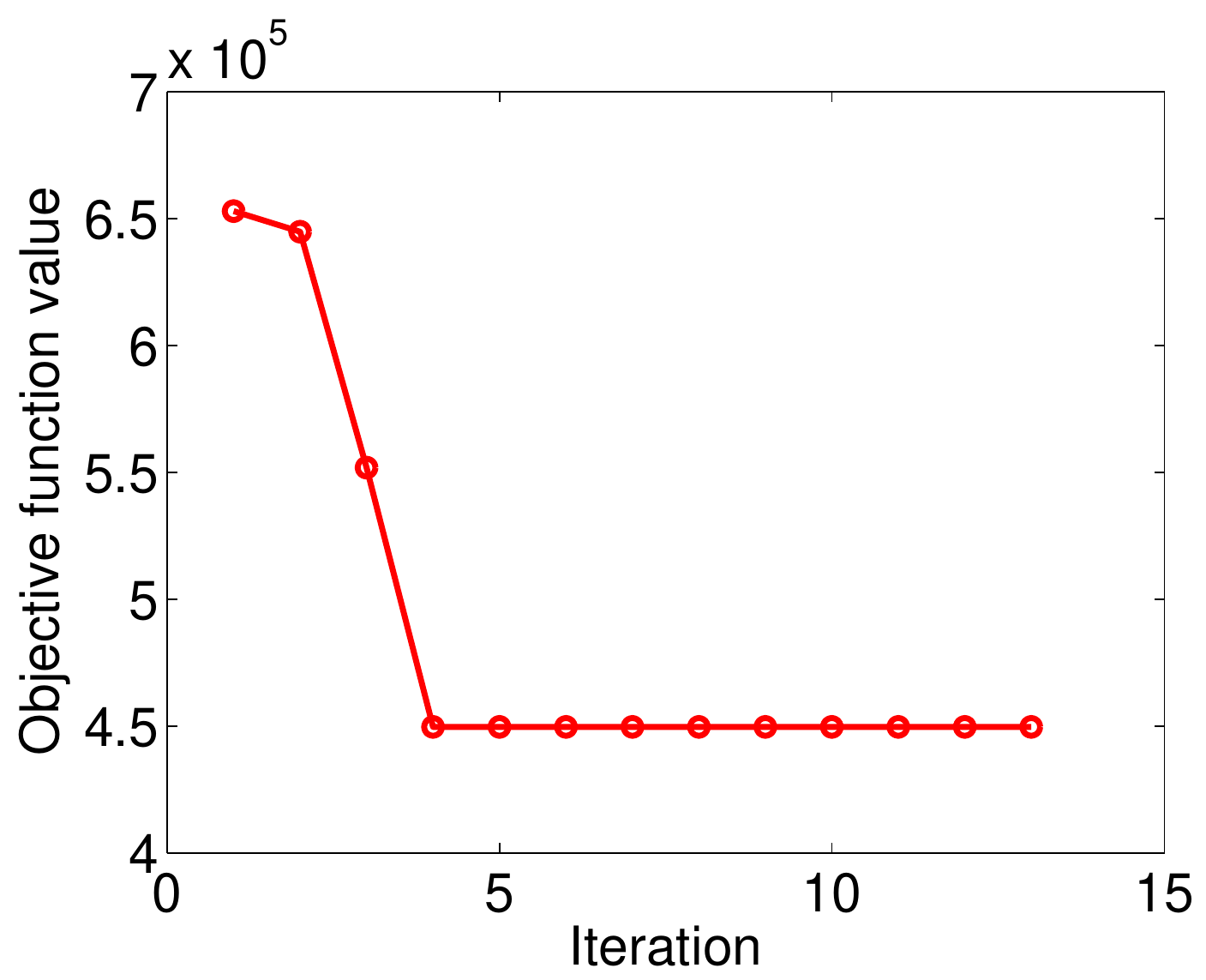}}
}
\caption{(a-c) CV-DMH performance variations with parameters in Eq.(\ref{disobj}) on \emph{Paris5K} when binary code length is 128. (d) Performance performance variations with iterations in Algorithm \ref{calv}.}
\label{fig_robustness}
\end{figure*}

\subsection{Convergency Analysis}
At each iteration in Algorithm \ref{calv}, the updating of variables will decrease the objective function value. As indicated by ALM optimization theory \cite{ALMone}, the iterations will make the optimization process converged. We also conduct empirical study on the convergence property using \emph{Paris500k}. Fig.\ref{fig_robustness}(d) presents the results. We observe that the objective function value first decreases with the number of iterations and then becomes steady after certain iterations (about 7 $\sim$ 8 iterations). This result demonstrates that the convergence of the proposed method.

\subsection{Parameter Study}
In this subsection, we conduct empirical experiments to study the performance variations with involved parameters in CV-DMH. Specifically, we observe the performance variations of CV-DMH with $\alpha$, $\beta$, and $\gamma$. They are used in Eq.(\ref{disobj}) to play the trade-off between regularization terms and empirical loss. Due to the limited space, we only report the results on \emph{Paris5K} when hashing code length is 128. Similar results can be found on other code lengths and datasets. We test the results when $\alpha$, $\beta$, $\gamma$ are varied from $[10^{-4}, 10^{-2}, 1, 10^2, 10^4]$. We observe performance variations with two of them by fixing the other parameter. Experimental results are presented in Fig.\ref{fig_robustness}(a-c). From these figures, we can clearly find that the performance is relatively stable to a wide range of $\alpha, \beta, \gamma$ variations.

\section{Conclusion and Future Work}
\label{sec:6}
In this paper, we propose a novel hashing scheme CV-DMH to learn compact hashing codes for supporting efficient and robust mobile landmark search. The design of CV-DMH has inspired by an interesting observation that: only canonical views of landmarks are frequently photographed and disseminated by different tourists, these views naturally provide effective visual representation basis of landmark hashing. We first develop submodular function based iterative mining to select canonical views that are discriminative and with limited redundancy. An intermediate representation is then generated on canonical views to robustly characterize the diverse visual landmark contents with sparse visual relations. At the final stage, we design a discrete binary embedding model to transform continuous intermediate representation to compact binary codes. To avoid relaxing quantization errors brought in conventional methods, we propose ALM-based optimization method to directly solve discrete solution. Experimental results on three real landmark datasets demonstrate that our proposed approach can achieve superior performance compared with several state-of-the-art approaches.

The current work will continue along several directions for further investigation:
\begin{enumerate}[1.]
    \item Recent works \cite{sigpro/ChengS16} indicate that contextual modalities associated with landmark images enjoy better discriminative capability. They inspire us to integrate contextual modalities with our hashing model and thus embed more discriminative semantics into hashing codes.
    \item Current three learning components of our approach are operated in three subsequent steps. In the further, we will try to develop a unified learning formulation to systemically integrate them together for further performance improvement.
    \item The Laplacian matrix construction based on visual graph in Eq.(\ref{disobj}) consumes $O(N^2)$. In the future, we will explore strategies to further reduce the computation from algorithmic perspective.
    \item We will apply our hashing model to other applications with the similar characteristics with landmarks.
\end{enumerate}

\ifCLASSOPTIONcaptionsoff
  \newpage
\fi

\section*{Acknowledgment}
The authors would like to thank the anonymous reviewers for their constructive and helpful suggestions.

\ifCLASSOPTIONcaptionsoff
  \newpage
\fi


\end{document}